\documentclass{article}

\usepackage{microtype}
\usepackage{graphicx}
\usepackage{subcaption}
\usepackage{booktabs} % for professional tables

\usepackage{hyperref}
\usepackage{xurl} 

\usepackage[accepted]{icml2026}

\usepackage{amsmath}
\usepackage{amssymb}
\usepackage{mathtools}
\usepackage{amsthm}
\usepackage{multirow}
\usepackage{adjustbox}
\usepackage[capitalize,noabbrev]{cleveref}

\theoremstyle{plain}
\newtheorem{theorem}{Theorem}[section]

\theoremstyle{definition}

\theoremstyle{remark}

\usepackage[textsize=tiny]{todonotes}

\icmltitlerunning{Rotation-Invariant Spherical Watermarking via Third-Order SO(3) Representation Coupling}

\begin{document}

\twocolumn[
  \icmltitle{Rotation-Invariant Spherical Watermarking via Third-Order $SO(3)$ Representation Coupling}

  % It is OKAY to include author information, even for blind submissions: the
  % style file will automatically remove it for you unless you've provided
  % the [accepted] option to the icml2026 package.

  % List of affiliations: The first argument should be a (short) identifier you
  % will use later to specify author affiliations Academic affiliations
  % should list Department, University, City, Region, Country Industry
  % affiliations should list Company, City, Region, Country

  % You can specify symbols, otherwise they are numbered in order. Ideally, you
  % should not use this facility. Affiliations will be numbered in order of
  % appearance and this is the preferred way.
  \icmlsetsymbol{equal}{*}

  \begin{icmlauthorlist}
    \icmlauthor{Pengzhen Chen}{iie,ucas,lab}
    \icmlauthor{Yanwei Liu}{iie,lab}
    \icmlauthor{Xiaoyan Gu}{iie,ucas,lab}
    \icmlauthor{Antonios Argyriou}{uot}
    \icmlauthor{Wu Liu}{ustc}
    \icmlauthor{Weiping Wang}{iie}
    % \icmlauthor{Firstname7 Lastname7}{comp}
    %\icmlauthor{}{sch}
    % \icmlauthor{Firstname8 Lastname8}{sch}
    % \icmlauthor{Firstname8 Lastname8}{yyy,comp}
    %\icmlauthor{}{sch}
    %\icmlauthor{}{sch}
  \end{icmlauthorlist}

  \icmlaffiliation{iie}{Institute of Information Engineering, Chinese Academy of Sciences}
  \icmlaffiliation{ucas}{School of Cyber Security, University of Chinese Academy of Sciences}
  \icmlaffiliation{lab}{State Key Laboratory of Cyberspace Security Defense}
  \icmlaffiliation{uot}{University of Thessaly}
  \icmlaffiliation{ustc}{University of Science and Technology of China}
  \icmlcorrespondingauthor{Yanwei Liu}{liuyanwei@iie.ac.cn}
  \icmlcorrespondingauthor{Xiaoyan Gu}{guxiaoyan@iie.ac.cn}

  % You may provide any keywords that you find helpful for describing your
  % paper; these are used to populate the "keywords" metadata in the PDF but
  % will not be shown in the document
  \icmlkeywords{Machine Learning, ICML}

  \vskip 0.3in
]

% this must go after the closing bracket ] following \twocolumn[ ...

% This command actually creates the footnote in the first column listing the
% affiliations and the copyright notice. The command takes one argument, which
% is text to display at the start of the footnote. The \icmlEqualContribution
% command is standard text for equal contribution. Remove it (just {}) if you
% do not need this facility.

% Use ONE of the following lines. DO NOT remove the command.
% If you have no special notice, KEEP empty braces:
\printAffiliationsAndNotice{}  % no special notice (required even if empty)
% Or, if applicable, use the standard equal contribution text:
% \printAffiliationsAndNotice{\icmlEqualContribution}

\begin{abstract}
% We study the construction of rotation-invariant scalars from panoramic signals through the lens of SO(3) representation theory. A panoramic image can be naturally modeled as a spherical function and represented using spherical harmonics, whose coefficients transform equivariantly under rotations. While the zeroth-order component yields a trivial invariant, it discards higher-order directional information. To address this limitation, we propose a principled construction of third-order SO(3) invariants by coupling higher-order spherical harmonics representations via tensor products and projecting onto the trivial representation. This construction is equivalent to the spherical bispectrum and produces rotation-invariant scalars that depend exclusively on higher-order components. We leverage this property to embed information in higher-order spherical harmonics coefficients while enabling reliable extraction from a rotation-invariant scalar, yielding an invariant watermarking mechanism. Theoretical analysis establishes the rotation invariance of the proposed construction, and experiments demonstrate robustness to arbitrary rotations of panoramic signals.

Reliable watermarking of panoramic imagery is fundamentally challenged by arbitrary 3D rotations. As panoramas are defined on the sphere, they naturally transform under the action of $SO(3)$, rendering conventional planar representations and augmentation-based robustness strategies inadequate and devoid of theoretical guarantees. To address this, we formulate panoramas as spherical signals and leverage $SO(3)$ representation theory to derive provably rotation-invariant descriptors. While spherical harmonic coefficients transform equivariantly under rotations, the natural invariant constructions are typically limited to zeroth-order statistics which eliminate directional information and severely constrain embedding capacity. In this work, we introduce a principled third-order invariant construction by coupling higher-order $SO(3)$ irreducible representations via tensor products and projecting onto the trivial representation. This yields a spherical invariant bispectrum that preserves phase information while remaining strictly rotation-invariant. Leveraging this property, we embed watermarks into higher-order spherical harmonic coefficients and recover them from invariant bispectral scalars, enabling reliable extraction under arbitrary 3D rotations. We provide a theoretical proof of $SO(3)$ invariance for it and demonstrate experimentally its near-perfect robustness to continuous rotations while maintaining high visual fidelity. Code is available at \url{https://github.com/JerCCC7/TRIAD}.%{https://github.com/JerCCC7/TRIAD}
\end{abstract}

\section{Introduction}

The rapid proliferation of AI-generated content (AIGC) has democratized the synthesis of high-fidelity $360^{\circ}$ panoramic imagery~\cite{wang2025survey}. Recent advances, including PanFusion \cite{zhang2024taming} and text-to-360 models \cite{dreamscene}, enable the creation of immersive spherical environments from directly natural language prompts. These technologies are accelerating applications across virtual reality and the Metaverse~\cite{shinde2023systematic,zhou2025generative,tukur2024metaverse}, while critically, providing foundational data for World Models~\cite{genex,li2025comprehensive} and Embodied AI agents~\cite{zheng2025panorama,wu2025quadreamer}. However, this ease of generation simultaneously exacerbates risks regarding copyright infringement and unauthorized content redistribution. Digital watermarking offers a principled mechanism for provenance tracking by embedding imperceptible identity signals into media. Yet, despite substantial progress in deep watermarking for planar images, extending these techniques to panoramic imagery remains fundamentally challenging due to its distinct geometric structure.

A panoramic image is not a signal defined on the Euclidean plane $\mathbb{R}^2$, but rather a function residing on the unit sphere $\mathbb{S}^2$. During consumption, users can freely alter their viewing direction via head-mounted displays, an interaction mathematically modeled by the action of the three-dimensional rotation group, $SO(3)$~\cite{cohen2018spherical}, on the spherical signal. When represented via standard Equirectangular Projection (ERP), such rotations induce highly non-linear and latitude-dependent distortions, including severe polar stretching and large-scale texture displacement~\cite{makadia2003direct}. Consequently, conventional watermark extraction methods, which rely on pixel-grid alignment or local convolutional consistency in Euclidean space, become inherently unstable under global rotations.

Current deep watermarking frameworks are\cite{vine,robustwide,trustmark,editguard,stegastamp,plugmark,relzero,sepmark,li2026gs} predominantly built upon Convolutional Neural Networks (CNNs) that exploit translational equivariance~\cite{ben2024deep}. While effective against perturbations like Gaussian noise or JPEG compression, they lack intrinsic robustness to geometric transformations. Prior studies typically resort to data augmentation as a heuristic remedy by injecting distorted samples during training to force the network to memorize attacked variations. However, we argue that this strategy is insufficient for theoretically trusted traceability especially for spherical data. Specifically, since $SO(3)$ is a continuous group describing infinite possible rotations~\cite{esteves2018learning}, each resulting in a distinct non-linear and complex pixel-space distortion. It is infeasible to exhaustively cover the transformation space via finite augmentation. Robustness obtained in this manner relies on memorization rather than geometric consistency, offering no theoretical guarantees and incurring significant training overhead. This fundamental geometric mismatch between the translational equivariance of planar CNNs and the rotational symmetry of spherical signals indicates that robust panoramic watermarking cannot be achieved through projection-space heuristics. Instead, it necessitates representations that explicitly respect the underlying $SO(3)$ symmetry. 

To address these challenges, we propose TRIAD, a theoretically grounded framework for provably robust watermarking that delves into the natural spherical structure of panoramic images. As shown in Figure~\ref{theory}, we model images using a Spherical Harmonics (SH) expansion, which provides a compact and continuous parameterization of the sphere, avoiding the non-uniform sampling artifacts inherent to ERP. In the SH domain, the zeroth-order SH coefficient $c_0$ is strictly rotation-invariant~\cite{kondor2025principles}. However, as $c_0$ corresponds to the global average (DC component) of the signal~\cite{sloan2008stupid}, embedding watermarks in this term would cause noticeable shifts in global luminosity and color, severely degrading perceptual quality. Therefore, we propose a novel embedding strategy based on the third-order spherical bispectrum. Specifically, we leverage higher-order spherical harmonic coefficients to carry the watermark information, which offers substantially greater capacity and improved imperceptibility. To recover this information robustly, we construct a third-order tensor product that couples three $SO(3)$ irreducible representations. From a representation-theoretic perspective, decomposing this tensor product yields a trivial ($l=0$) component corresponding to the bispectrum. This resulting scalar is mathematically guaranteed to be invariant under $SO(3)$ rotations, enabling reliable recovery of watermark information embedded in rotation-sensitive higher-order coefficients while preserving strict rotation invariance at extraction time.

Our main contributions are summarized as follows:

1. Provable Geometric Robustness. We identify the theoretical limitations of augmentation-based robustness for spherical data and propose a watermarking framework with certified $SO(3)$ invariance, grounded in group representation theory rather than empirical heuristics.

2. Spherical Bispectrum Invariant. We introduce a novel embedding-to-extraction mechanism based on the spherical bispectrum. By coupling higher-order spherical harmonic coefficients via tensor products, we derive a rotation-invariant scalar that carries messages embedded in higher-order coefficients.

3. TRIAD Framework. We propose TRIAD, an end-to-end framework that seamlessly integrates equivariant operations and invariant extraction. 
Extensive experiments demonstrate that TRIAD achieves superior robustness against arbitrary $360^{\circ}$ rotations while maintaining high visual fidelity.

\begin{figure}[t]
  \vskip 0.2in
  \begin{center}
    \centerline{\includegraphics[width = \linewidth]{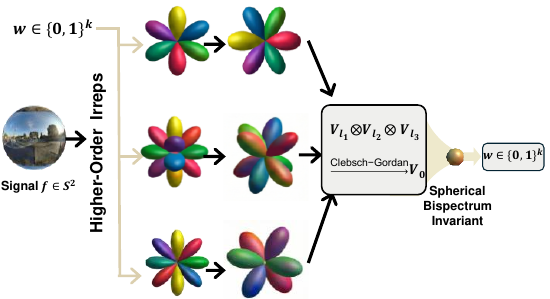}}
    \caption{
     \textbf{The grounding theory of TRIAD.} By coupling higher-order spherical harmonics representations and projecting onto the trivial representation, we obtain a scalar invariant (the spherical bispectrum) that retains phase information while remaining invariant to rotations, enabling information embedding in sensitive equivariant coefficients with reliable invariant recovery.
    }
    \label{theory}
  \end{center}
\end{figure}
% By embedding watermark signals directly into this invariant scalar channel, our method achieves provable robustness to rotation attacks while preserving the phase information necessary for high-fidelity image reconstruction. This work establishes a principled connection between geometric deep learning and spherical signal watermarking, offering a theoretically grounded and practically effective solution for intellectual property protection in generative panoramic media.

\section{Related Work}

\paragraph{Robust Watermarking for Panoramic and 3D Data.}
Digital watermarking has evolved from traditional frequency-domain methods (DCT/DWT)~\cite{dwt} to deep learning-based frameworks \cite{zhu2018hidden, luo2020distortion}. However, standard CNN-based methods fail to generalize to panoramic images due to the severe geometric distortions inherent in equirectangular projections. 
%While specific schemes for $360^{\circ}$ images have been proposed \cite{pa2}, they typically rely on impractical reprojection steps and the knowing of the coordinates of original images.
While several schemes for $360^{\circ}$ images have been proposed~\cite{pa2}, they typically operate in projection space and lack synchronization mechanisms to handle 3D rotations.
Similarly, in the 3D data domain, recent works have explored watermarking for meshes~\cite{narendra2024levenberg}, point clouds~\cite{zaman2025deep}, and emerging 3D Gaussian Splatting representations~\cite{chen2025guardsplat}, employing techniques such as salient point learning, SVD-based embedding, and neural feature extraction.
Despite differences in representation, these approaches share a common reliance on data augmentation during training, which only provides empirical robustness, and often degrades under unseen transformations, offering no theoretical guarantees. In contrast, our approach exploits the algebraic structure of the rotation group $SO(3)$, enabling mathematically strict rotation-invariance without exhaustive augmentation.

% \paragraph{Robust Watermarking for Panoramic and 3D Data.}
% Digital watermarking has evolved from traditional frequency-domain methods to deep learning-based frameworks. However, standard CNN-based methods fail on panoramic images due to the severe geometric distortions inherent in equirectangular projections. While specific schemes for $360^{\circ}$ images exist, they often rely on impractical reprojection steps.

% In the 3D domain (meshes and point clouds), ensuring robustness against rigid transformations (RST) and geometric distortions is critical. Recent deep learning approaches~\cite{zhu2023rethinking, yoo2022deep} typically model watermarking as an end-to-end reconstruction task, incorporating differentiable attack layers (e.g., rotation, noise, and simplification) during training to achieve \textit{empirical} robustness. Other methods attempt to align inputs to a canonical pose using spatial transformer networks or principal axes before watermark extraction. However, these learning-based strategies share a common limitation: they rely heavily on exhaustive data augmentation or approximate alignment, which lacks theoretical guarantees and often degrades under unseen attacks or severe cropping. In contrast, our framework leverages the algebraic structure of the rotation group $SO(3)$ to guarantee mathematically strict rotation-invariance without relying on brute-force augmentation.

\textbf{Spherical CNNs and $SO(3)$-Equivariance.}
To handle spherical signals without projection-induced distortion, Cohen et al.~\cite{cohen2018spherical} introduced Spherical CNNs based on the Fourier theorem on $SO(3)$. Subsequent studies generalized this via $G$-steerable convolutions \cite{weiler2018learning} and tensor field networks \cite{thomas2018tensor}, establishing the foundation for modern equivariant libraries like \texttt{e3nn} \cite{geiger2022e3nn}. These frameworks leverage Clebsch-Gordan (CG) coefficients~\cite{kondor2018clebsch} to perform tensor products, ensuring that learned feature fields transform predictably under rotations, i.e., equivariantly.
While highly effective for discriminative tasks including molecular modeling \cite{batzner20223,kohler2025bridging} and image processing~\cite{ocampo2022scalable,esteves2018learning}, their application to generative watermarking remains unexplored. 
% A key hurdle is the ``Phase Retrieval" problem: former methods rely on the Power Spectrum (magnitude of features) to achieve invariance \cite{esteves2018learning}, which discards phase information essential for high-fidelity signal reconstruction.

\textbf{Higher-Order Invariants and Bispectrum.}
% \paragraph{Higher-Order Invariants and Bispectrum.}
Invariant constructions are fundamental for handling geometric transformations, as they enable stable signal representations independent of group actions. A widely adopted approach is the power spectrum~\cite{kazhdan2003rotation}, which achieves invariance by discarding phase information. While effective for recognition tasks~\cite{poulenard2019effective}, this phase elimination fundamentally limits its capacity of hiding messages.
Addressing this requires higher-order statistics. In classical signal processing, the bispectrum (triple correlation) is known to retain phase information \cite{nikias1993signal}. Recently, higher-order invariant constructions have been revisited in geometric deep learning, where third-order tensor contractions are employed to capture complex relational structures~\cite{mataigne2024selective,iglesias2024higher,sanborn2023general}.
Nevertheless, these studies primarily focus on characterizing fixed physical systems (e.g., atom potentials). Our work is the first to construct a \textit{learnable} spherical bispectrum-based framework specifically for watermarking. 
By leveraging the phase-preserving property of third-order coupling, we enable high-capacity watermark embedding in rotation-sensitive subspace while extracting from strictly $SO(3)$-invariant bispectral scalars.

\section{Theoretical Background}

This section reviews the mathematical foundations underlying the proposed $SO(3)$-invariant watermarking framework for spherical signals. We briefly introduce rotation group actions, spherical harmonics, and higher-order equivariant constructions, focusing exclusively on the structures essential to our method.

\subsection{Rotation Group $SO(3)$ and Spherical Signals}

Three-dimensional rotations are described by the special orthogonal group $SO(3)$, consisting of all $3 \times 3$ orthogonal matrices with determinant one. For a signal $f(\mathbf{ \omega })$ defined on the unit sphere $\mathbb{S}^2$, the action of a rotation $R \in SO(3)$ is defined as:
\begin{equation}
(\mathcal{R}_R f)(\omega) = f(R^{-1}\omega),
\end{equation}
which corresponds to a rigid rotation of the underlying spherical domain.

Unlike planar images, spherical signals do not admit a global notion of translation. Consequently, the translational equivariance exploited by conventional CNNs has no natural analogue on $\mathbb{S}^2$. When spherical data is represented using planar projections such as equirectangular projection (ERP), rotations in $SO(3)$ induce highly non-uniform, latitude-dependent distortions. As a result, the locality and weight-sharing assumptions underlying standard convolutional kernels are fundamentally violated, motivating the need for representations that explicitly respect rotational symmetry.

\subsection{Spherical Harmonics}

Spherical harmonics (SH) form an orthonormal basis for the space of square-integrable functions on the sphere, $L^2(\mathbb{S}^2)$. Using spherical coordinates $\omega = (\theta, \phi)$, any spherical signal $f(\omega)$ can be expanded as:
\begin{equation}
f(\omega) = \sum_{l=0}^{l_{\max}} \sum_{m=-l}^{l} {c}_{l}^{m} \, Y_{l}^{m}(\omega),
\end{equation}
where $l$ denotes the frequency degree and $m$ indexes angular variation within each degree.
A key property of spherical harmonics is their structured response to rotations. Under a rotation $R$, the SH coefficients transform linearly as:
\begin{equation}
c_{l}' = D^{l}(R)\,c_{l},
\end{equation}
where $c_{l} \in \mathbb{C}^{2l+1}$ collects the coefficients at degree $l$, and $D^{l}(R)$ denotes the corresponding Wigner-$D$ matrix. Importantly, coefficients of different degrees do not mix. This block-diagonal transformation law constitutes $SO(3)$-\emph{equivariance} and forms the algebraic backbone of spherical signal processing and equivariant neural networks.

\subsection{Irreducible Representations and Tensor Products}

In equivariant learning frameworks such as \texttt{e3nn}, features are organized as direct sums of irreducible representations (irreps) of $SO(3)$. Each irrep is labeled by its degree $l$ (and parity), with $l=0$ corresponding to scalars and $l\geq1$ corresponding to vectors.

The interaction between equivariant features is governed by tensor products. Given two irreducible representations $\mathcal V_{l_{1}}$ and $\mathcal V_{l_{2}}$ of degrees $l_{1}$ and $l_{2}$, their tensor product decomposes into a direct sum of irreps:
\begin{equation}
 \mathcal V_{l_{1}}\otimes  \mathcal V_{l_{2}}\cong\bigoplus_{J=|l_{1}-l_{2}|}^{l_{1}+l_{2}}\mathcal V_{J},
\end{equation}
where the decomposition is mediated by Clebsch--Gordan (CG) coefficients. This operation enables controlled coupling across frequency degrees while preserving equivariance, and crucially allows the construction of invariant features by projecting onto the trivial irrep $\mathcal V_0$.

\begin{figure*}[ht]
  % \vskip 0.2in
  \begin{center}
    \centerline{\includegraphics[width=\linewidth]{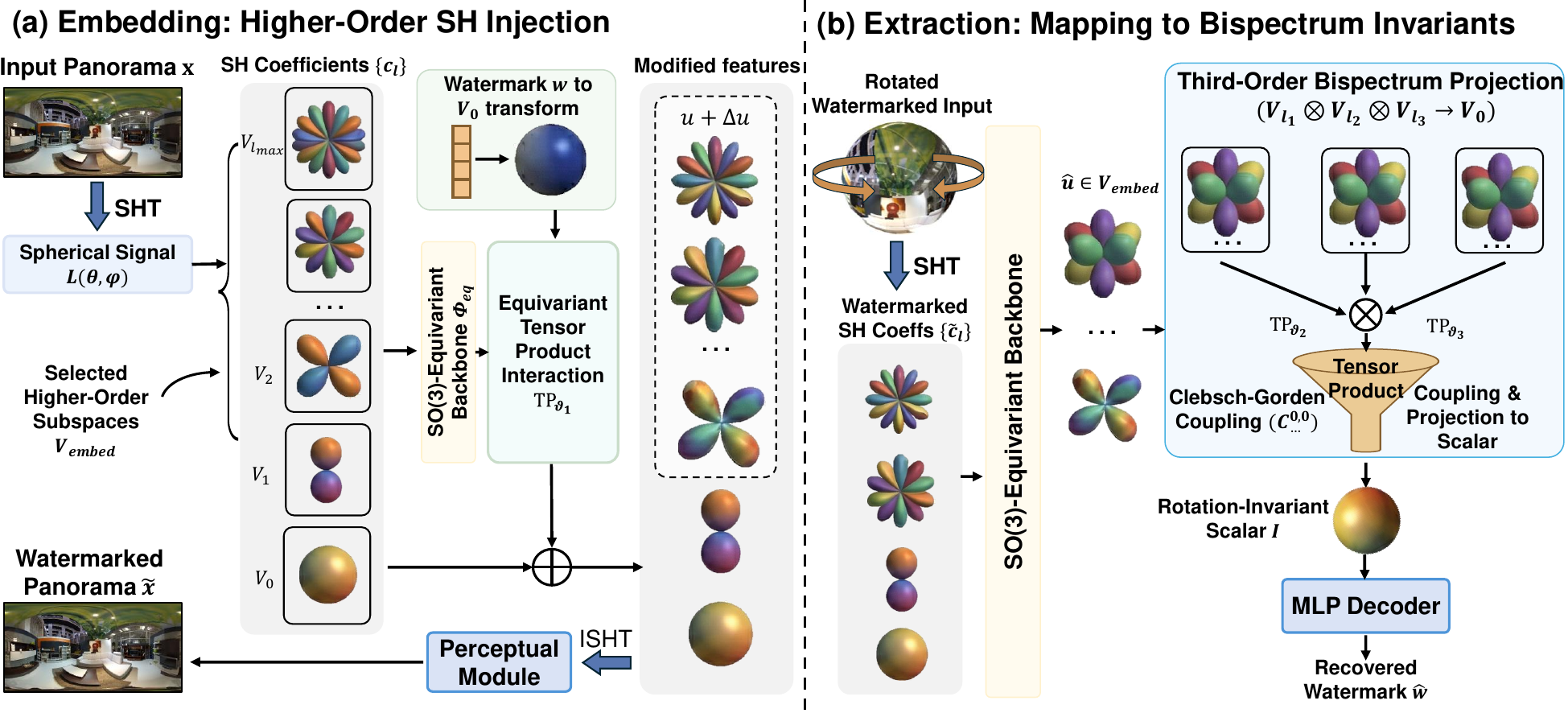}}
    \caption{\textbf{Framework of TRIAD.}
      (a) Given an input panorama, we first represent it in the spherical harmonics (SH) domain and process the resulting coefficients with an $SO(3)$-equivariant backbone $\Phi_{\mathrm{eq}}$. Watermark information is embedded by modifying selected higher-order SH subspaces, which preserves equivariance and perceptual fidelity under rotations. (b) During extraction, the watermarked SH coefficients are coupled through a third-order tensor product and projected onto the trivial representation using Clebsch–Gordan coefficients. This operation produces a rotation-invariant scalar (the spherical bispectrum), from which the embedded watermark can be reliably extracted regardless of the panorama’s orientation.
    }
    \label{framework}
  \end{center}
\end{figure*}

\section{Methodology}
\subsection{Design Principles}
We begin by formalizing the construction of a rotation-invariant
watermark derived from a panoramic signal. Let
$f : \mathbb{S}^2 \rightarrow \mathbb{R}^C$ denote a panoramic
image represented in spherical harmonics (SH), with coefficients
$c_l^m \in \mathcal V_l$, where $\mathcal V_l$ denotes the degree-$l$ irreducible representation
of $SO(3)$.
% We define the parameterized equivariant tensor product, denoted as $\text{TP}_\theta(\cdot, \cdot; V_{out})$, which couples two input representations and maps the result to a specified output subspace $V_{out}$.

The trivial representation
$\mathcal V_0$ yields rotation-invariant statistics by construction. However it corresponds to global, low-frequency image characteristics, and any perturbation of this component induces perceptually severe artifacts. 
% Intuitively, to remain the capability of extracting messages from the trivial zeroth-order statistics while embedding beyond it, we exploit third-order correlations among SH coefficients.
Intuitively, valid invariant watermarking requires embedding information beyond zeroth-order statistics while retaining the ability to extract it from an invariant quantity. To this end, we exploit third-order correlations among higher-order SH coefficients.
Specifically, we consider the tensor product of three irreducible representations:
\begin{equation}
   \mathcal V_{l_1} \otimes\mathcal V_{l_2} \otimes \mathcal V_{l_3}
=
\bigoplus_{l} \mathcal V_l.
\end{equation}
% where the decomposition is governed by Clebsch--Gordan coefficients.
By projecting the representation onto the trivial subspace $\mathcal V_0$, we obtain a scalar invariant derived exclusively from higher-order coefficients, thereby decoupling watermark embedding from invariant extraction.
% Among the resulting components, the trivial representation
% $V_0$ yields a scalar quantity that is invariant under SO(3)
% rotations. 
Concretely, the resulting bispectrum invariant $I$
is defined as:
\begin{equation}
I =
\sum_{l_1,l_2,l_3}
\sum_{m_1,m_2,m_3}
C^{0,0}_{l_1 m_1\, l_2 m_2\, l_3 m_3}
\, c_{l_1}^{m_1} c_{l_2}^{m_2} c_{l_3}^{m_3},
\end{equation}
where $C^{0,0}_{\dots}$ denotes the Clebsch--Gordan coupling coefficients projecting onto the trivial representation and can be computed via Wigner 3-j symbols:
\begin{equation}
\begin{aligned}
C_{l_1 m_1 l_2 m_2 l_3 m_3}^{0,0}
&=
\sqrt{\frac{(2l_1+1)(2l_2+1)(2l_3+1)}{4\pi}} \\
&\quad \times
\begin{pmatrix}
l_1 & l_2 & l_3 \\
0 & 0 & 0
\end{pmatrix}
\begin{pmatrix}
l_1 & l_2 & l_3 \\
m_1 & m_2 & m_3
\end{pmatrix}.
\end{aligned}
\end{equation}

\begin{theorem}
  \label{thm:bigtheorem}
The bispectrum-based scalar $I$ defined
above is invariant under arbitrary $SO(3)$ rotations.
% Moreover, information embedded in higher-order SH
% coefficients that contributes to $I$ can be extracted from this
% rotation-invariant quantity.
Furthermore, perturbations embedded in the higher-order spherical harmonic coefficients induce non-vanishing variations in $I$, satisfying the necessary condition for extraction from this rotation-invariant quantity.
\end{theorem}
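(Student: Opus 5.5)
The plan has two parts, matching the two claims of Theorem~\ref{thm:bigtheorem}. Part one shows invariance by writing $I$ as a sum of tensor contractions. Part two computes the first-order variation of $I$ under a perturbation of the higher-order coefficients.

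\textbf{Invariance.} For each fixed triple $(l_1,l_2,l_3)$, write the inner sum as a trilinear form
\[
T_{l_1l_2l_3}(c_{l_1},c_{l_2},c_{l_3})=\sum_{m_1,m_2,m_3}\begin{pmatrix} l_1 & l_2 & l_3\\ m_1 & m_2 & m_3\end{pmatrix} c_{l_1}^{m_1}c_{l_2}^{m_2}c_{l_3}^{m_3},
\]
multiplied by the $m$-independent prefactor $\sqrt{(2l_1+1)(2l_2+1)(2l_3+1)/4\pi}\,\begin{pmatrix} l_1 & l_2 & l_3\\ 0&0&0\end{pmatrix}$. Under a rotation, each $c_l$ transforms as $c_l'=D^l(R)c_l$, as stated in the background section. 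The key fact is the invariance of the 3-j symbol under the diagonal action:
\[
\sum_{m_1',m_2',m_3'}\begin{pmatrix} l_1 & l_2 & l_3\\ m_1' & m_2' & m_3'\end{pmatrix}D^{l_1}_{m_1'm_1}(R)D^{l_2}_{m_2'm_2}(R)D^{l_3}_{m_3'm_3}(R)=\begin{pmatrix} l_1 & l_2 & l_3\\ m_1 & m_2 & m_3\end{pmatrix}.
\]
This identity says that the 3-j tensor spans the unique (up to scale) trivial subrepresentation $\mathcal V_0\subset\mathcal V_{l_1}\otimes\mathcal V_{l_2}\otimes\mathcal V_{l_3}$. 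It follows from the Clebsch--Gordan decomposition recalled earlier: couple $l_1,l_2$ to $J=l_3$, then contract with $\mathcal V_{l_3}$ to reach $J=0$. Alternatively, I would cite the Gaunt-integral representation $\int_{\mathbb S^2}Y_{l_1}^{m_1}Y_{l_2}^{m_2}Y_{l_3}^{m_3}\,d\omega$, which equals exactly $C^{0,0}_{l_1m_1l_2m_2l_3m_3}$. With that representation, $I=\int_{\mathbb S^2} f_{l_1}f_{l_2}f_{l_3}$ summed over triples, where $f_l$ is the degree-$l$ band of $f$. Invariance then follows in one line, because each band projection commutes with $\mathcal R_R$ and the measure on $\mathbb S^2$ is rotation invariant. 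I would present the Gaunt route as the main proof and the 3-j identity as a remark. Substituting $c_l'$ then gives $I'=I$ term by term.

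\textbf{Sensitivity.} Perturb $c_{l}\mapsto c_l+\epsilon\,\delta_l$ for $l$ in a selected set of degrees $\ge1$. Since $I$ is a cubic polynomial in the coefficients, $I(\epsilon)=I+\epsilon\,\langle\nabla I,\delta\rangle+O(\epsilon^2)$. The gradient component along $c_{l_1}^{m_1}$ is $\sum C^{0,0}c_{l_2}^{m_2}c_{l_3}^{m_3}$, summed over the other two slots and including the symmetric permutations. To obtain a non-vanishing variation, I would exhibit an admissible triple $(l_1,l_2,l_3)$ with $l_i\ge1$ satisfying the triangle inequality and even $l_1+l_2+l_3$, so that $\begin{pmatrix} l_1&l_2&l_3\\0&0&0\end{pmatrix}\neq0$. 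For example, $(1,1,2)$ or $(2,2,2)$ works. For such a triple, the gradient is a nonzero polynomial in the remaining coefficients, so it is nonzero for generic signals; its zero set is a proper algebraic subset, hence has measure zero. Consequently, for any perturbation direction $\delta$ not orthogonal to $\nabla I$, which is again a generic condition, the first-order change is nonzero. Even if the gradient vanishes, the cubic term $I(\delta)$ itself is a nonzero polynomial, so $I(\epsilon)-I$ is a nonzero polynomial in $\epsilon$ and is nonzero for all but finitely many $\epsilon$.

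\textbf{Main obstacle.} The delicate point is making the second claim precise. The statement cannot hold for every perturbation: perturbations orthogonal to the gradient, or signals in the zero set, give no first-order change. So I would state the result as holding generically, meaning outside a measure-zero set of signals and directions. The essential ingredient is showing that $I$ restricted to degrees $\ge1$ is not identically zero, which reduces to nonvanishing of a specific parity-allowed 3-j symbol. For the $(2,2,2)$ case I would verify this directly from the closed form $\begin{pmatrix}l&l&l\\0&0&0\end{pmatrix}\neq0$ for even $l$.
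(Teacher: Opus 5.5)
Your proof is correct. For the invariance claim it takes a genuinely different route from the paper.

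\textbf{Invariance.} The paper substitutes $c_l\mapsto D^l(R)c_l$ into each $I_{l_1,l_2,l_3}$ and reorders the sums. It then invokes, as a standard fact, the identity $\sum_{m_1,m_2,m_3}C^{0,0}_{l_1m_1\,l_2m_2\,l_3m_3}D^{(l_1)}_{m_1m_1'}D^{(l_2)}_{m_2m_2'}D^{(l_3)}_{m_3m_3'}=C^{0,0}_{l_1m_1'\,l_2m_2'\,l_3m_3'}$, which is what you relegate to a remark. Your main route instead observes that the paper's $C^{0,0}$ is exactly the Gaunt coefficient $\int_{\mathbb S^2}Y_{l_1}^{m_1}Y_{l_2}^{m_2}Y_{l_3}^{m_3}\,d\omega$. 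This holds for orthonormal complex harmonics with Condon--Shortley phase, the convention the paper's 3-j formula presupposes. Hence $I=\sum\int f_{l_1}f_{l_2}f_{l_3}\,d\omega$, which is simply $\int f^3\,d\omega$ of the band-limited signal when all ordered triples up to $l_{\max}$ are summed. Invariance then follows from the rotation invariance of the band decomposition and of the surface measure.

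This buys three things. The paper's contraction identity becomes a corollary rather than an input. The parity rule you need later comes for free: $f_l(-\omega)=(-1)^lf_l(\omega)$, so odd-sum triples integrate to zero.

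The paper's route buys generality. It applies verbatim to any invariant coupling tensor on $\mathcal V_{l_1}\otimes\mathcal V_{l_2}\otimes\mathcal V_{l_3}$ without identifying it with an integral. Examples are bare CG/3-j paths, including odd $l_1+l_2+l_3$, where the Gaunt coefficient vanishes but the 3-j tensor does not. Another is the learned decoder contraction $\mathrm{TP}_{\vartheta_3}(h,\hat u)|_{\mathcal V_0}$.

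\textbf{Sensitivity.} You use the same first-order expansion as the paper. But you supply what the paper only asserts (``non-zero for generic perturbations''). A parity-allowed triple with $\bigl(\begin{smallmatrix} l_1&l_2&l_3\\0&0&0\end{smallmatrix}\bigr)\neq0$ makes $I$ a nonzero cubic form. By Euler's relation $\sum_{l,m}c_l^m\,\partial I/\partial c_l^m=3I$, its gradient is then a nonzero polynomial map whose zero set is null.

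Three points to tighten:
\begin{enumerate}
\item The admissible triple must be built from the degrees actually perturbed and included in the sum. For the paper's $\mathcal L_{embed}=\{6,8,14\}$, take e.g.\ $(6,6,6)$ or $(6,8,14)$; your $(1,1,2)$ and $(2,2,2)$ are only illustrations. A single odd band gives $I\equiv0$, so this hypothesis is genuinely needed.
\item Real panoramas satisfy $c_l^{-m}=(-1)^m\overline{c_l^m}$, so ``generic'' must be read on that real subspace. This is harmless, since the subspace is totally real of full dimension. Alternatively, note directly that $\int_{\mathbb S^2}(Y_2^0)^3\,d\omega\neq0$. Either way, it should be said.
\item Your fallback via the cubic term only works for $\delta$ with $I(\delta)\neq0$, which is again a generic rather than universal condition.
\end{enumerate}
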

\begin{proof}
   See Appendix~\ref{proof} for the detailed derivation.
\end{proof}

% This quantity corresponds to a third-order SO(3) invariant,
% or referred to as the bispectrum of the spherical signal.

% Importantly, although the invariant is obtained by projecting
% onto the $l=0$ component, it depends exclusively on selected higher-order
% SH coefficients. Due to the orthogonality of spherical harmonics basis functions,
% different irreducible subspaces $V_l$ are mutually orthogonal
% and do not interfere with each other.
% As a result, watermark injection in specific SH degrees
% does not affect components outside this subspace. This property enables embedding information in
% higher-order components of the panoramic signal while ensuring
% that the embedded watermark can be reliably recovered from a rotation-invariant scalar.

A critical property of Theorem~\ref{thm:bigtheorem} is the spectral isolation provided by the SH basis. Due to the orthogonality of spherical harmonics, operations within specific subspaces $\mathcal V_{l_1}, \mathcal V_{l_2}, \mathcal V_{l_3}$ do not introduce interference in unrelated frequency bands. As shown in Figure~\ref{framework}, this enables targeted watermark injection in higher-order components while guaranteeing strict rotation invariance via bispectral projection.

\subsection{Watermark Embedding }
Given an input panorama $x \in \mathbb{R}^{H \times 2H}$
and a watermark vector $w \in \{0,1\}^{k}$, the encoder embeds
watermarks by operating directly in the spherical
harmonic (SH) domain. We first lift the input panorama to its
spectral representation by computing SH coefficients up to
degree $l_{\max}$:
\begin{equation}
c = \{c_l\}_{l=0}^{l_{\max}}, \quad c_l \in\mathcal V_l,
\end{equation}
where each irreducible component transforms equivariantly and independently under rotation. 
% This lifting maps the spatial signal into a space in
% which rotations act equivariantly and independently on each irreducible component.

% The SH coefficients are linearly projected into a
Watermark embedding is restricted to a selected set of higher-order subspaces:
\begin{equation}
\mathcal V_{{embed}} = \bigoplus_{l \in \mathcal L_{embed}} \mathcal V_l,
\quad l > 0,
\end{equation}
where $\mathcal L_{embed}$ denotes the selected
embedding degrees. An $SO(3)$-equivariant backbone $\Phi_{\mathrm{eq}}$ is 
applied to project the raw SH coefficients to selected structured spectral features
\begin{equation}
u = \Phi_{\mathrm{eq}}(c), \quad u \in \mathcal V_{{embed}}.
\end{equation}
Due to the orthogonality of
spherical harmonic basis, operations within
$\mathcal V_{{embed}}$ do not affect other frequency components.

Watermark information is embedded into $\mathcal V_{{embed}}$ by conditioning the spectral
features on the watermark vector from $w$. Specifically, watermark $w$ is mapped
to scalar features transforming under the trivial
representation $\mathcal V_0$, and injected through an $SO(3)$-equivariant
interaction with the spectral features. Specifically, we
employ a parameterized equivariant tensor product $\text{TP}_{\vartheta_1}$(detailed in Appendix~\ref{network}) to produce a fused update:
\begin{equation}
\Delta u = \text{TP}_{\vartheta_1}(u, w)|_{\mathcal V_{embed}},
\quad
\text{TP}_{\vartheta_1}: \mathcal V_{{embed}} \otimes \mathcal V_0
\rightarrow \mathcal V_{{embed}}.
\end{equation}
By construction, $\Delta u$ lies in the same irreducible
subspaces as $u$ and therefore preserves the $SO(3)$
transformation behavior of the spectral representations.

The modified spectral features are computed by
$ u + \Delta u$. These features are projected back to the full SH coefficient space and transformed to the spatial domain via the inverse spherical harmonic transform (ISHT), yielding a
residual image $\Delta x$. To ensure imperceptibility, the
residual is modulated by a perceptual module (Appendix ~\ref{perceptual}), which is composed of a learnable mask
$M_{\mathrm{perc}}(x)$ together with a geometric prior from the structure characteristics of ERP
$M_{\mathrm{geo}}$, and added to the original panorama to produce the final watermarked output:
\begin{equation}
\tilde{x} = x + M_{\mathrm{perc}}(x) \odot M_{\mathrm{geo}}
\odot \Delta x.
\end{equation}

\subsection{Extraction}

Given a watermarked panorama $\tilde{x}$, the decoder recovers the embedded watermark by extracting rotation-invariant
third-order statistics from its spherical harmonic
representation. The panorama is first lifted to the
SH domain up
to $l_{\max}$:
\begin{equation}
\tilde{c} = \{\tilde{c}_l\}_{l=0}^{l_{\max}}, \quad \tilde{c}_l \in \mathcal V_l.
\end{equation}
These coefficients are projected onto the same embedding space
$\mathcal V_{{embed}}$ and
processed by an $SO(3)$-equivariant backbone, producing spectral
features $\hat{u} \in \mathcal V_{{embed}}$ that contain the
watermark-induced perturbations. Subsequently,
to construct rotation-invariant descriptors, the decoder computes
third-order equivariant statistics of the spectral features.
Specifically, we apply two successive parameterized equivariant tensor
products:
\begin{equation}
   h = \text{TP}_{\vartheta_2}(\hat{u}, \hat{u})|_{\mathcal{V}_{embed}} , \quad z = \text{TP}_{\vartheta_3}(h, \hat{u})|_{\mathcal{V}_0},
\end{equation}

where $\text{TP}_{\vartheta_3}$ constrains the output subspace to the trivial representation $\mathcal V_0$. This two-stage
construction is equivalent to forming a third-order tensor
product $\text{TP}(\hat{u},\hat{u},\hat{u})$ whose output is projected onto the
trivial representation, yielding a learnable bispectrum invariant that
preserves the sign of the embedded watermark.

% The resulting scalar components constitute rotation-invariant
% features that preserve the sign information of the embedded
% watermark, in contrast to second-order statistics which discard
% such information. 
These invariant features are aggregated and
mapped through a lightweight MLP network to produce the
recovered watermark $\hat{w}$. Since extraction relies exclusively on projections onto the trivial representation $\mathcal{V}_0$,
the decoding process is provably invariant to arbitrary 3D
rotations.

\subsection{Loss Functions}

The network is trained end-to-end using a weighted combination of image fidelity loss and watermark extraction loss:
\begin{equation}
    \mathcal{L}_{total} = \lambda_{m} \mathcal{L}_{MSE}(x, \tilde{x}) + \lambda_{bce} \mathcal{L}_{BCE}(w, \hat{w}),
\end{equation}

where $\mathcal{L}_{MSE}$ is Mean Squared Error (MSE) loss to ensure visual quality, and $\mathcal{L}_{BCE}$ is the binary cross-entropy loss for message recovery. $\lambda_{m}$ and $\lambda_{bce}$ are the weighted factors.

\section{Experiments}
\subsection{Experimental Setup}

\textbf{Datasets.} 
We utilize two publicly available panoramic datasets:
panoContext~\cite{zhang2014panocontext} and SUN360~\cite{sun}. We randomly select 10,000 panoramas for training and 2,000 for testing. All panoramas are resized to $512 \times 1024$ equirectangular format for evaluation. For baseline methods that do not support this resolution, we follow the resolution scaling strategy from TrustMark~\cite{trustmark} to interpolate the watermark strength (see Appendix~\ref{resolution}), which has been shown to preserve watermarking performance.

\textbf{Implementation Details.} The $SO(3)$-equivariant backbone is implemented using \texttt{e3nn}~\cite{geiger2022e3nn}, consisting of 2 layers of Gated Blocks operating on identical irreducible representations. Spherical harmonic transform is computed with a cutoff degree of $l_{max}=16$. We set $\mathcal{L}_{embed} = \{6,8,14\}$, corresponding to the subspace $\mathcal V_{\text{embed}} \in \{128\times6e,128\times8e,64\times14e\}$, unless otherwise specified. The network is trained using the Adam optimizer with a learning rate of $10^{-4}$ for 300 epochs. The watermark length is set to $k=32$ bits. The experiments are conducted on a NVIDIA A100 GPU. The loss weights $\lambda_{BCE}$ is set to 10 and $\lambda_{m}$ is initially set to 1 and linearly increased to 20 over the second 100 epochs. 

\textbf{Baselines.}
We compare TRIAD against the following open-sourced methods: StegaStamp~\cite{stegastamp}, SepMark~\cite{sepmark}, TrustMark~\cite{trustmark}, EditGuard~\cite{editguard}, Robust-Wide~\cite{robustwide}, VINE~\cite{vine}. All methods are evaluated using their released checkpoints unless specified.
% In order to verify the advancement of theoretical robustness against data augmentation. These methods are also retrained with aggressive 3D rotation augmentation.

\textbf{Metrics.} We evaluate the performance using Peak Signal-to-Noise Ratio (PSNR), Structural Similarity (SSIM), and Bit Accuracy. PSNR and SSIM quantify the perceptual quality of watermarked images, and Bit Accuracy measures watermark extraction reliability.

\subsection{Comparative Performance to Arbitrary $SO(3)$ Rotations against Data Augmentation}
\begin{figure*}[ht]
  \vskip 0.2in
  \begin{center}
    \centerline{\includegraphics[width=\linewidth]{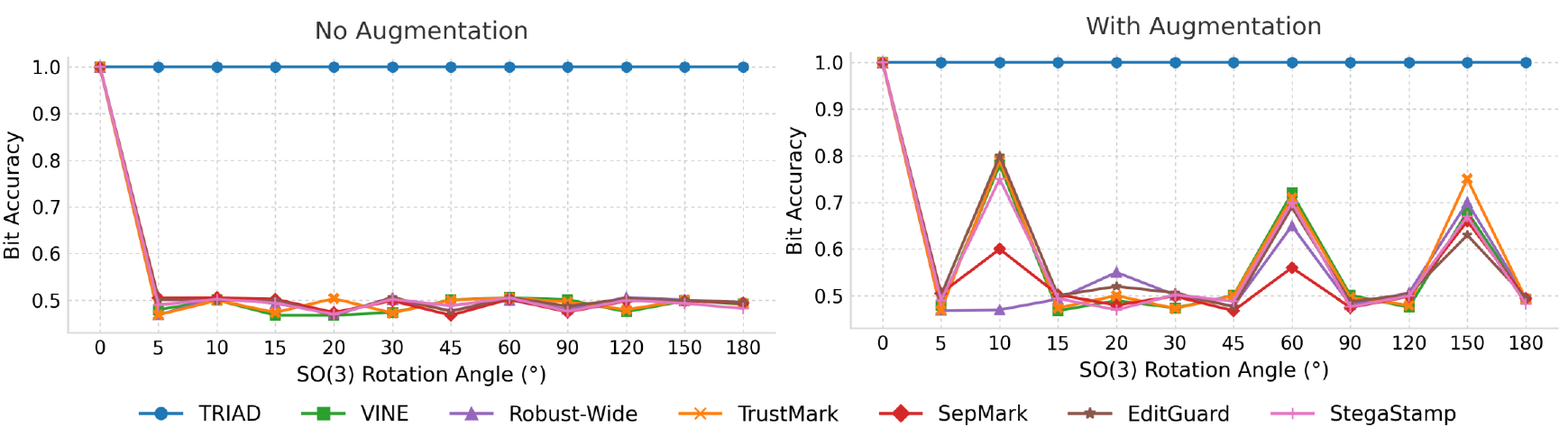}}
    \caption{
     Bit accuracy under random $SO(3)$ rotations with increasing rotation angles. We parameterize rotations by their rotation angle, which measures the geodesic distance to the identity rotation on $SO(3)$.
For each angle, 1,000 rotation axes are sampled uniformly on 
$\mathbb{S}^2$, and the average bit accuracy is reported.}
    \label{rotation_robust}
  \end{center}
\end{figure*}
To evaluate robustness under unconstrained geometric transformations, we apply random 3D rotations sampled uniformly from $SO(3)$ using random unit quaternions. After rotation, watermark extraction is performed directly, without alignment or inverse transformation.

We compare TRIAD against all baseline methods under this setting. To validate the necessity of theoretical robustness over data augmentation, we additionally report baseline performance with extensive rotation-based augmentation. The augmentation training details are provided in Appendix~\ref{sec:baseline_aug}.
As shown in Figure~\ref{rotation_robust}, without augmentation, all baseline methods exhibit severe performance degradation once the input is rotated, with bit accuracy collapsing to near-random guessing. This reflects the fundamental mismatch between spherical rotational symmetries and planar convolutional architectures, whose inductive bias is translational equivariance. In ERP representations, any $SO(3)$ rotation can induce highly non-linear, latitude-dependent distortions that cannot be compensated for by local convolutional filters.

While rotation-based data augmentation partially improves robustness at specific angles encountered during training, the resulting performance remains highly irregular and angle-dependent. Crucially, $SO(3)$ is a continuous group with infinitely many possible elements, and any augmentation strategy can only cover a finite subset of this space. Consequently, models trained with augmentation remain vulnerable to unseen rotations. The oscillatory performance observed across rotation angles indicates that such robustness arises from empirical memorization rather than principled invariance, and does not generalize uniformly over $SO(3)$.

Moreover, aggressive augmentation introduces an additional trade-off between robustness and imperceptibility. To maintain watermark extractability under larger transformation uncertainty, baseline methods must increase embedding strength, which directly degrades perceptual quality, as shown in Appendix Table~\ref{tab:psnr_accuracy}. 
% In practice, we observe a pronounced trade-off between watermark robustness and image quality: augmentation-trained models exhibit visibly degraded panoramas, particularly near polar regions where rotational distortions are most severe. This trade-off is structural rather than incidental, stemming from the lack of geometric alignment between the representation space and the underlying symmetry group.

In contrast, TRIAD maintains near-perfect bit accuracy across all rotation angles without any data augmentation. This is a direct consequence of the theoretically guaranteed $SO(3)$ invariance of the proposed bispectral construction. The results demonstrate that for continuous and unbounded transformation groups such as $SO(3)$, theoretical invariance is not merely advantageous but necessary for practical and reliable watermarking. Additional visualizations of rotated panoramas are provided in Appendix Figure~\ref{rotation_visual}.

% You may also want to include tables that summarize material. Like figures,
% these should be centered, legible, and numbered consecutively. However, place
% the title \emph{above} the table with at least 0.1~inches of space before the
% title and the same after it, as in \cref{sample-table}. The table title should
% be set in 9~point type and centered unless it runs two or more lines, in which
% case it should be flush left.

% % Note use of \abovespace and \belowspace to get reasonable spacing
% % above and below tabular lines.

\begin{table*}[t]
  \caption{Comparison with baseline methods.
The best and the second best results are highlighted in bold and underlined, respectively.
Mixed denotes the averaged performance over combinations of three randomly selected distortions.}
  \label{sample-table}
  
  \begin{center}
    \begin{small}
      \begin{sc}
      \resizebox{\linewidth}{!}{%
        \begin{tabular}{cccccccccccc}
\toprule
\multirow{2}{*}{\textbf{Method}} &  
\multirow{2}{*}{\textbf{Capacity}} &
\multirow{2}{*}{\textbf{PSNR$\uparrow$}} &  
\multirow{2}{*}{\textbf{SSIM$\uparrow$}} &  
\multicolumn{8}{c}{\textbf{General Distortions$\uparrow$}} \\
\cmidrule(lr){5-12}
& & & &
\textbf{JPEG} & \textbf{Resize} & \textbf{Contrast} & \textbf{Brightness} & 
\textbf{Gaussian Noise} & \textbf{Gaussian Blur} & \textbf{Median Filter} & \textbf{Mixed} \\ 
\midrule

StegaStamp \cite{stegastamp} 
& 100 &27.96 & 0.8986
& 0.973 & 0.812 & 0.987 & 0.986 & 0.961 & 0.879 & 0.894 & 0.978 \\

SepMark \cite{sepmark} 
& 30 & 35.68 & 0.9799
&0.985 & 0.864 &0.988 & 0.984 & 0.978 & 0.987 & 0.969 & 0.977 \\

TrustMark \cite{trustmark} 
& 100 & \underline{40.83} & \textbf{0.9968}
& 0.993 & \textbf{1.000} & 0.982 & 0.955 & 0.986 &0.973 & \underline{0.984} & 0.979 \\

EditGuard \cite{editguard} 
& 64 & 36.58 & 0.8865
& 0.957 & 0.634 & 0.966 & 0.941 & 0.935 & 0.513 & 0.546 & 0.679 \\

Robust-Wide \cite{robustwide} 
& 64 &\textbf{41.65} & 0.9921
& \underline{0.997} &\underline{0.998} & 0.973 &\textbf{0.990} & \underline{0.989} & \underline{0.999} & \textbf{1.000} & \textbf{0.992} \\

VINE \cite{vine} 
& 100 & 36.33 & 0.9865
& \textbf{1.000} & \textbf{1.000} & \underline{0.994} & 0.976 &\textbf{1.000} & 0.951 & 0.965 & \underline{0.986} \\

\textbf{TRIAD (Ours)} 
& 32 & 39.22 & \underline{0.9946}
& 0.978 &\textbf{1.000} & \textbf{1.000} & \underline{0.988} & 0.975 & \textbf{1.000} & \textbf{1.000} & 0.984 \\

\bottomrule
\end{tabular}
}
      \end{sc}
    \end{small}
  \end{center}
  \vskip -0.1in
\end{table*}

\subsection{General Comparison}

\paragraph{Comparative Robustness against Common Distortions.}
We evaluate the robustness across all methods under a range of common image distortions, with results reported in Table~\ref{sample-table}. Beyond robustness to arbitrary rotations, TRIAD demonstrates strong resilience to diverse perturbations without explicit data augmentation, including JPEG Compression, Gaussian Filter, Gaussian Noise, Median Filter, Resize, Brightness, and Contrast, achieving performance comparable to augmented baselines. In most cases, its performance is comparable to or exceeds that of augmented baselines. Detailed distortion parameters are provided in Appendix~\ref{distortions}.

Notably, several of these robustness properties can be directly attributed to the representation-theoretic structure of the proposed framework.
Operations such as resizing and Gaussian Filter correspond to isotropic low-pass perturbations in the spherical harmonic domain, inducing smooth, frequency-dependent attenuation of coefficients. Since the proposed watermark is recovered via third-order $SO(3)$-invariant bispectral contractions, which multiplicatively couple multiple coefficients across frequency bands, such spectral attenuation results in a continuous scaling of the invariant response rather than structural destruction, thereby allowing stable extraction.
Similarly, regarding additive Gaussian noise, while introducing a bias in higher-order statistics, this isotropic noise primarily induces a magnitude scaling proportional to the signal strength, which preserves the relative geometric configuration of the coupled coefficients. As a result, the bispectral invariant preserves sufficient structure for stable extraction under moderate noise levels. JPEG compression, while non-linear in the spatial domain, predominantly suppresses localized high-frequency content and does not introduce coherent global perturbations aligned with the invariant subspace. This allows reliable recovery of the globally aggregated bispectral features.

In contrast to robustness obtained via data-driven augmentation, these properties arise intrinsically from the algebraic structure of the $SO(3)$-invariant representation. The observed robustness is therefore not incidental, but a direct consequence of embedding watermark information into globally invariant, higher-order spectral statistics. Additional theoretical analysis is in Appendix~\ref{distortion_theory}.

\textbf{Fidelity.} TRIAD achieves high visual fidelity, with PSNR above 39.2 dB and SSIM exceeding 0.99, only marginally lower than TrustMark and Robust-Wide. Visualizations of watermarking patterns are in Appendix Figure~\ref{compare_visual}.

\subsection{Sensitivity Analysis}
\label{sec:analysis}
\paragraph{Impact of Embedding Subspace Composition $\mathcal{V}_{embed}$.} We study the effects of spectral composition of the embedding subspace $\mathcal{V}_{embed}$. A fundamental trade-off exists in spherical watermarking: embedding in lower-degree irreducible representations (e.g., $l=4$) offers inherent geometric stability but induce perceptible low-frequency artifacts, whereas embedding in higher-degree representations (e.g., $l=16$) ensures improved imperceptibility at the cost of increased vulnerability to high-frequency attenuation during signal processing. We conduct an ablation study on subspace configurations, ranging from single-degree targets to multi-scale combinations. As illustrated in Figure~\ref{vembed_and_lmax}, restricting the embedding to low degree ($\mathcal V_4$) yields high robustness ($>99\%$) but compromises visual fidelity. Conversely, exclusively utilizing high degrees ($\mathcal V_{16}$) preserves quality but degrades extraction accuracy. Our proposed configuration, $\mathcal{V}_{embed} = \mathcal V_6 \oplus \mathcal V_8 \oplus \mathcal V_{14}$, exploits spectral diversity via direct sums. By distributing the watermark signal across disjoint irreducible representations, the model effectively balances the objective: it anchor the invariant identity on the robust low-frequency modes while spreading the information payload into the perceptually masked mid-frequency regions. This spectral allocation enables our method to achieve near-perfect recovery while maintaining high perceptual quality (PSNR $\approx$ 39.2 dB), significantly outperforming single-degree strategies.
\begin{figure*}[t]
  \vskip 0.2in
  \begin{center}
    \centerline{\includegraphics[width=\linewidth]{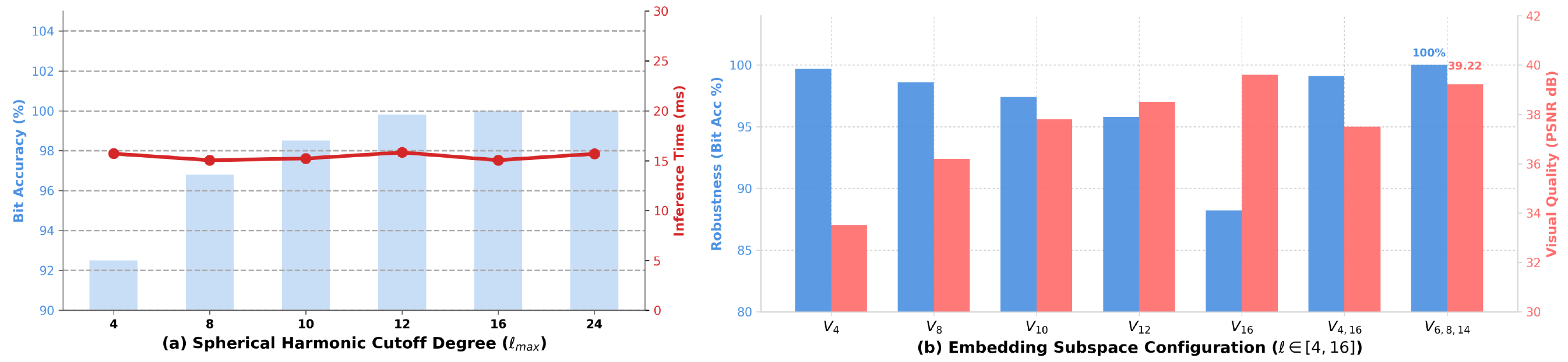}}
    \caption{
    (a) Inference time and Bit Accuracy with different spherical harmonic cutoff degree ($l_{max}$). (b) Performance with different settings of embedding subspace $\mathcal{V}_{embed}$. Robustness is evaluated under contrast (0.7x) attack. }
    \label{vembed_and_lmax}
  \end{center}
\end{figure*}

\textbf{Analysis of Cutoff Degree $l_{\max}$.}
We investigate the trade-off between computational cost, perceptual fidelity, and robustness under varying spherical harmonic cutoff degrees $l_{\max}$. Although increasing $l_{\max}$ theoretically raises the complexity of the Spherical Harmonic Transform, our architecture demonstrates that its impact on inference is limited. Given the same number of selected embedding subspaces $V_{\mathrm{embed}}$, the inference latency remains remarkably stable as $l_{\max}$ increases from 4 to 24. This indicates that the computational bottleneck is dominated by the channel-wise equivariant tensor product operations rather than the spatial-spectral transforms.

Regarding robustness and fidelity, higher cutoff degrees provide access to broader spectral subspaces and thus potentially larger embedding capacity. However, they also introduce a trade-off: higher-degree coefficients correspond to finer spatial details, which are more susceptible to attenuation under lossy compression and aliasing during ERP-to-sphere projection. To examine whether this trend remains stable beyond the default setting, we further extend the ablation to a wider range of cutoff degrees and embedding subspaces. Specifically, we progressively increase $l_{\max}$ from 16 to 28 and enlarge $V_{\mathrm{embed}}$ by incorporating additional higher-degree irreducible subspaces. As summarized in Table~\ref{tab:wider_lmax}, the bit accuracy under 3D rotations remains consistently at 100\% across all tested configurations, indicating that no obvious numerical instability or abrupt robustness degradation is observed in this range. Meanwhile, PSNR decreases smoothly from 39.22 dB to 37.19 dB as the embedding subspace becomes broader, suggesting that the main effect of using wider spectral subspaces is a gradual reduction in visual fidelity rather than a failure of the rotation-invariant extraction mechanism.

These results further support our choice of $l_{\max}=16$ as the default configuration. Although larger cutoffs remain robust to 3D rotations in the tested range, they bring limited robustness benefit while gradually sacrificing perceptual quality. Therefore, we adopt $l_{\max}=16$ together with $V_{\mathrm{embed}}=\{6,8,14\}$ as the fidelity--robustness sweet spot, which maximizes the usable embedding capacity within geometrically stable frequency subspaces while avoiding unnecessary fidelity degradation.

\begin{table}[t]
\centering
\caption{Ablation over a wider range of cutoff degrees and embedding subspaces. Bit Accuracy is evaluated under 3D rotations.}
\label{tab:wider_lmax}
\resizebox{\columnwidth}{!}{
\begin{tabular}{c c c c}
\toprule
$l_{\max}$ & $L_{\mathrm{embed}}$ & PSNR $\uparrow$ & Bit Accuracy $\uparrow$ \\
\midrule
16 & $\{6,8,14\}$ & 39.22 & 1.000 \\
20 & $\{6,8,14,16\}$ & 39.16 & 1.000 \\
24 & $\{6,8,14,16,20\}$ & 38.46 & 1.000 \\
28 & $\{6,8,14,16,20,22\}$ & 37.19 & 1.000 \\
\bottomrule
\end{tabular}
}
\end{table}
\textbf{Importance of Bispectrum over Power Spectrum.}
Detailed theoretical analysis is provided in the Appendix~\ref{bispectrum}. We evaluate the efficacy of the bispectrum versus the power spectrum as projection mechanisms for mapping higher-order equivariant features to zeroth-order invariant scalars.
While both serve as channels to distill rotation-invariant signatures from the embedding space, the bispectrum uniquely retains directional phase information via third-order coupling ($\mathcal{V}_{l_1} \otimes \mathcal{V}_{l_2}  \otimes \mathcal{V}_{l_3}  \rightarrow \mathcal{V}_{0} $), which is structurally discarded by the second-order Power Spectrum ($\mathcal{V}_{l_1}  \otimes \mathcal{V}_{l_2}  \rightarrow \mathcal{V}_{0} $). We identify this limitation as ``Phase Blindness": the Power Spectrum is a non-injective (many-to-one) mapping, meaning multiple distinct high-order signals can collapse to the same invariant scalar, creating ambiguity that limits watermark capacity. To validate this, we train a variant (\textit{Power-Spec}) in which the bispectral extraction is replaced by power spectrum projection. As shown in Table~\ref{tab:ablation}, the \textit{Power-Spec} model fails to converge when the payload exceeds 16 bits. In contrast, the bispectrum preserves phase coupling, allowing distinct signal configurations to be uniquely resolved in the invariant domain. This capability enables TRIAD to scale to higher capacity with near-perfect recovery, validating that third-order statistics are the optimal sufficient statistics for high-capacity invariant watermarking.

\begin{table}[t]
\caption{Ablation study on invariant projection mechanism. The Power Spectrum fails to support high payload capacities due to phase blindness.}
\label{tab:ablation}
\begin{center}
\begin{small}
\resizebox{\columnwidth}{!}{%
\begin{tabular}{lccc}
\toprule
Projection Mechanism & Order & Capacity (bits) & Bit Acc (\%) \\
\midrule
Power Spectrum & 2 & 16 & 92.4 \\
Power Spectrum & 2 & 32 & 61.3  \\
\textbf{Bispectrum} & \textbf{3} & \textbf{16} & \textbf{100.0} \\
\textbf{Bispectrum} & \textbf{3} & \textbf{32} & \textbf{100.0} \\
\textbf{Bispectrum} & \textbf{3} & \textbf{64} & \textbf{100.0} \\
\bottomrule
\end{tabular}
}
\end{small}
\end{center}
\end{table}

% \textbf{Phase Preservation:} 

\section{Discussion}

Despite the theoretical guarantees of rotation invariance, our framework faces an inherent trade-off between embedding capacity and spectral robustness. As indicated in our analysis (Section~\ref{sec:analysis}), spherical harmonic components of higher degrees ($l > 16$) are susceptible to attenuation from common distortions such as lossy compression and aliasing artifacts. To prioritize the reliable recovery of the watermark under strictly invariant geometric priors, we constrain the embedding to middle-frequency spectral bands. Consequently, while strategies discussed in Appendix~\ref{sec:group} demonstrate potential for capacity enhancement, the effective payload is currently capped (e.g., 64 bits) to ensure stability. Scaling to high-capacity payloads without compromising the robustness of the invariant descriptors remains an open challenge for the spherical watermarking domain.
Looking ahead, we will focus on extending the current global invariant framework to handle partial spherical signals by investigating local equivariant symmetries and enhancing the watermark capacity.

\section{Conclusion}

In this work, we present TRIAD, a principled framework for rotation-invariant watermarking of panoramic imagery grounded in the representation theory of $SO(3)$. By modeling panoramas as spherical signals and leveraging third-order representation coupling, we construct bispectral invariants that enable reliable watermark extraction under arbitrary 3D rotations. Unlike augmentation-based approaches, our method provides theoretical guarantees of invariance while preserving high perceptual quality by embedding information exclusively in higher-order spherical harmonic components. Extensive experiments demonstrate that TRIAD achieves near-perfect robustness to continuous $SO(3)$ rotations and strong resilience to common signal distortions. We believe this work highlights the necessity of invariant representations for watermarking on non-Euclidean domains and opens new directions for information hiding based on higher-order group-theoretic invariants.

\section*{Acknowledgements}
This work was supported by the Strategic Priority Research Program of the Chinese Academy of Sciences (NO. XDB0690302), and the National Nature Science Foundation of China under Grant 62371450.

\section*{Impact Statement}

This work advances digital watermarking for spherical media by introducing a provably rotation-invariant framework grounded in group representation theory. By moving beyond the prevailing reliance on data augmentation, which offers only empirical and bounded robustness, we establish a framework for provably robust watermarking grounded in group representation theory. This theoretical guarantee is critical for the reliable provenance tracking of immersive content in increasingly complex pipelines, such as World Models and the Metaverse, where geometric transformations are intrinsic rather than adversarial. Furthermore, our introduction of the third-order spherical bispectrum as a carrier for information transmission provides a new direction for information hiding. By demonstrating that higher-order spectral invariants can serve as a strictly rotation-invariant domain for embedding and extraction, we provide a mathematically grounded alternative to spatial or frequency-based heuristics. This approach not only solves the immediate challenge of $SO(3)$ robustness but also establishes a generalized methodology for signal processing on non-Euclidean manifolds. We anticipate this will inspire future research into equivariant information carriers for other geometric data types, fostering the development of trustworthy copyright protection mechanisms for the emerging applications in immersive media, 3D vision and embodied AI systems.

\bibliography{example_paper}
\bibliographystyle{icml2026}

%%%%%%%%%%%%%%%%%%%%%%%%%%%%%%%%%%%%%%%%%%%%%%%%%%%%%%%%%%%%%%%%%%%%%%%%%%%%%%%
%%%%%%%%%%%%%%%%%%%%%%%%%%%%%%%%%%%%%%%%%%%%%%%%%%%%%%%%%%%%%%%%%%%%%%%%%%%%%%%
% APPENDIX
%%%%%%%%%%%%%%%%%%%%%%%%%%%%%%%%%%%%%%%%%%%%%%%%%%%%%%%%%%%%%%%%%%%%%%%%%%%%%%%
%%%%%%%%%%%%%%%%%%%%%%%%%%%%%%%%%%%%%%%%%%%%%%%%%%%%%%%%%%%%%%%%%%%%%%%%%%%%%%%
\newpage
\appendix
\onecolumn
\section{Theoretical Analysis}
\subsection{Proof of Theorem~\ref{thm:bigtheorem}}

\label{proof}
% \begin{theorem}
%   \label{thm:bigtheorem}
% Let $L(\omega)$ be a panoramic signal with spherical harmonics
% coefficients $\{c_l^m\}$. The bispectrum-based scalar $I$ defined
% above is invariant under arbitrary SO(3) rotations of $L$.
% Moreover, any information embedded in higher-order SH
% coefficients that contributes to $I$ can be extracted from this
% rotation-invariant quantity.
% \end{theorem}
\begin{proof}

Fix arbitrary degrees $(l_1,l_2,l_3)$.
Let $R \in SO(3)$ be an arbitrary rotation.
Under $R$, the spherical harmonics coefficients transform as
\[
c_l^m \;\mapsto\; \sum_{m'=-l}^{l} D^{(l)}_{m m'}(R)\, c_l^{m'},
\]
where $D^{(l)}(R)$ denotes the Wigner-$D$ matrix corresponding
to the irreducible representation $\mathcal V_l$.

Consider the corresponding bispectrum component
\[
I_{l_1,l_2,l_3} =
\sum_{m_1,m_2,m_3}
C^{0,0}_{l_1 m_1\, l_2 m_2\, l_3 m_3}
\, c_{l_1}^{m_1} c_{l_2}^{m_2} c_{l_3}^{m_3}.
\]
After applying the rotation $R$, the transformed quantity
becomes
\[
\begin{aligned}
I_{l_1,l_2,l_3}(R)
&=
\sum_{m_1,m_2,m_3}
C^{0,0}_{l_1 m_1\, l_2 m_2\, l_3 m_3}
\left(
\sum_{m_1'} D^{(l_1)}_{m_1 m_1'}(R) c_{l_1}^{m_1'}
\right)
\left(
\sum_{m_2'} D^{(l_2)}_{m_2 m_2'}(R) c_{l_2}^{m_2'}
\right) \\
&\quad\quad\times
\left(
\sum_{m_3'} D^{(l_3)}_{m_3 m_3'}(R) c_{l_3}^{m_3'}
\right).
\end{aligned}
\]

Reordering the summations yields
\[
I_{l_1,l_2,l_3}(R)
=
\sum_{m_1',m_2',m_3'}
\left(
\sum_{m_1,m_2,m_3}
C^{0,0}_{l_1 m_1\, l_2 m_2\, l_3 m_3}
D^{(l_1)}_{m_1 m_1'}(R)
D^{(l_2)}_{m_2 m_2'}(R)
D^{(l_3)}_{m_3 m_3'}(R)
\right)
c_{l_1}^{m_1'} c_{l_2}^{m_2'} c_{l_3}^{m_3'}.
\]

By the invariance property of Clebsch--Gordan coefficients, the contraction of three Wigner-$D$ matrices with
$C^{0,0}$ satisfies
\[
\sum_{m_1,m_2,m_3}
C^{0,0}_{l_1 m_1\, l_2 m_2\, l_3 m_3}
D^{(l_1)}_{m_1 m_1'}(R)
D^{(l_2)}_{m_2 m_2'}(R)
D^{(l_3)}_{m_3 m_3'}(R)
=
C^{0,0}_{l_1 m_1'\, l_2 m_2'\, l_3 m_3'}.
\]

Substituting this identity back, we obtain
\[
I_{l_1,l_2,l_3}(R)
=
I_{l_1,l_2,l_3}.
\]

Since the full bispectrum invariant $I$ is obtained by summing
$I_{l_1,l_2,l_3}$ over all $(l_1,l_2,l_3)$ and the above argument
holds independently for each triple, the complete bispectrum
invariant $I$ is invariant under arbitrary $SO(3)$ rotations.

This establishes the rotation invariance of $I$.
We next analyze its sensitivity to perturbations in higher-order
spherical harmonics coefficients.
Consider a small perturbation
$\Delta c_l^m$ embedded in higher-order SH coefficients.
Due to the linearity of the tensor product and the projection
onto the trivial representation, the resulting change in $I$
can be expressed as
\[
\Delta I = \sum_{l_1,l_2,l_3} \sum_{m_1,m_2,m_3} 
C^{0,0}_{l_1 m_1\, l_2 m_2\, l_3 m_3} 
\Big( \Delta c_{l_1}^{m_1} c_{l_2}^{m_2} c_{l_3}^{m_3} + 
      c_{l_1}^{m_1} \Delta c_{l_2}^{m_2} c_{l_3}^{m_3} +
      c_{l_1}^{m_1} c_{l_2}^{m_2} \Delta c_{l_3}^{m_3} \Big)
+ O(\Delta c^2).
\]

Hence, $\Delta I$ is non-zero for generic perturbations. This confirms that the embedded information is not lost during the projection to the invariant scalar $I$, but rather manifests as detectable structural changes, providing the discriminative basis for the learnable decoder to recover the watermark. Any information embedded in the higher-order SH
coefficients that contributes to $I$ manifests in the
rotation-invariant scalar and can, in principle, be extracted
given knowledge of the embedding scheme.

% \hfill$\square$

\end{proof}

\subsection{Theoretical Interpretation of Robustness against Common Distortions}
\label{distortion_theory}

The robustness of the proposed watermarking framework to common image distortions arises naturally from the multilinearity, continuity, and global aggregation properties of third-order SO(3)-invariant bispectral representations. Unlike data augmentation, which empirically approximates robustness, these properties follow directly from the algebraic structure of the embedding space, providing principled stability guarantees beyond rotation invariance.

Here, we analyze the theoretical stability of the spherical bispectrum operator. While the proposed TRIAD architecture employs a equivariant backbone prior to bispectrum computation, this theoretical analysis remains highly relevant.
As demonstrated in Figure~\ref{fidelity_analysis}, the watermarking signal is basically embedded onto the targeted Spherical Harmonics subspace, which showcases that the backbone $\Phi_{\mathrm{eq}}(\cdot)$ effectively functions as a projection from full coefficients to those of targeted subspace. Therefore, the robustness inherent to the raw SH coefficients below directly extends to our method.

\subsubsection{Preliminaries and Notation}
Let $f(\omega)$ be a spherical signal defined on $\mathbb{S}^2$ with spherical harmonic expansion
\[
f(\omega) = \sum_{l=0}^{l_{\max}} \sum_{m=-l}^{l} c_l^m Y_l^m(\omega).
\]
Following the main paper, we define the third-order SO(3)-invariant bispectrum scalar as
\begin{equation}
I =
\sum_{l_1,l_2,l_3}
\sum_{m_1,m_2,m_3}
C^{0,0}_{l_1 m_1\, l_2 m_2\, l_3 m_3}
\, c_{l_1}^{m_1} c_{l_2}^{m_2} c_{l_3}^{m_3},
\label{eq:global_bispectrum}
\end{equation}
where $C^{0,0}_{l_1 m_1\, l_2 m_2\, l_3 m_3}$ denotes the Clebsch--Gordan coefficients corresponding to projection onto the trivial representation. By construction, $I$ is invariant under arbitrary SO(3) rotations.

\subsubsection{Stability under Isotropic Gaussian Filtering}
Isotropic Gaussian filtering on the sphere corresponds to convolution with the spherical heat kernel. In the spherical harmonic domain, this induces a frequency-dependent attenuation:
\[
\tilde c_l^m = g(l)\, c_l^m,
\quad
g(l) = e^{-\sigma^2 l(l+1)}.
\]

Substituting $\tilde c_l^m$ into Eq.~\eqref{eq:global_bispectrum} yields:
\begin{align}
\tilde I
&=
\sum_{l_1,l_2,l_3}
\sum_{m_1,m_2,m_3}
C^{0,0}_{l_1 m_1\, l_2 m_2\, l_3 m_3}
\,
g(l_1) g(l_2) g(l_3)
\, c_{l_1}^{m_1} c_{l_2}^{m_2} c_{l_3}^{m_3} \\
&=
\sum_{l_1,l_2,l_3}
g(l_1) g(l_2) g(l_3)
\!\!
\sum_{m_1,m_2,m_3}
C^{0,0}_{l_1 m_1\, l_2 m_2\, l_3 m_3}
\, c_{l_1}^{m_1} c_{l_2}^{m_2} c_{l_3}^{m_3}.
\end{align}

% Thus, Gaussian filtering scales each frequency triplet contribution smoothly without altering the invariant coupling structure. In particular, the deviation $|\tilde I - I|$ is a continuous function of $\sigma$, implying stability of the bispectrum invariant under isotropic blur.

Equivalently, if $I_{l_1,l_2,l_3}$ denotes the bispectral contribution of the triplet $(l_1,l_2,l_3)$, then
\begin{equation}
\tilde{I}
=
\sum_{l_1,l_2,l_3}
g(l_1)g(l_2)g(l_3) I_{l_1,l_2,l_3}.
\end{equation}
This shows that isotropic blur acts on the bispectrum through a structured and degree-dependent attenuation of frequency-triplet responses. Importantly, the coupling pattern induced by the Clebsch--Gordan projection is preserved: blur rescales each valid triplet contribution without mixing unrelated irreducible subspaces. For a fixed bandwidth $l_{\max}$, the induced variation is bounded by
\begin{equation}
|\tilde{I}-I|
\leq
\sum_{l_1,l_2,l_3 \leq l_{\max}}
\left|g(l_1)g(l_2)g(l_3)-1\right|
|I_{l_1,l_2,l_3}|.
\end{equation}
Therefore, under moderate blur, the bispectral descriptor undergoes controlled attenuation rather than unstructured distortion. This structured response explains why the learned extractor can retain reliable recovery when sufficient low- and mid-frequency bispectral components remain.

\subsubsection{Effect of Resizing and Low-Pass Perturbations}

Resizing with anti-aliasing can be approximated as a low-pass operation in the spherical harmonic domain:
\begin{equation}
\tilde{c}^{m}_{l} =
\begin{cases}
c^{m}_{l}, & l \leq l_c, \\
0, & l > l_c.
\end{cases}
\end{equation}
The corresponding bispectrum is then restricted to frequency triplets whose degrees remain below the cutoff:
\begin{equation}
\tilde{I}
=
\sum_{l_1,l_2,l_3 \leq l_c}
\sum_{m_1,m_2,m_3}
C^{0,0}_{l_1m_1l_2m_2l_3m_3}
c^{m_1}_{l_1}c^{m_2}_{l_2}c^{m_3}_{l_3}.
\end{equation}

Thus, low-pass perturbations remove bispectral interactions involving truncated high-frequency components, while preserving the interactions among retained low- and mid-frequency irreducible subspaces. For natural images, spectral energy is typically concentrated in low- and mid-frequency bands, whereas very high-frequency components carry comparatively weaker energy and are more sensitive to sampling artifacts. Consequently, moderate resizing primarily suppresses high-frequency triplets while leaving a substantial portion of signal-dependent bispectral structure intact.

To make this statement precise, let
\begin{equation}
\mathcal{T}_{c}
=
\{(l_1,l_2,l_3): l_1,l_2,l_3 \leq l_c\}
\end{equation}
denote the retained set of triplets. The amount of preserved bispectral information can be characterized by the retained bispectral energy ratio
\begin{equation}
\rho(l_c)
=
\frac{
\sum_{(l_1,l_2,l_3)\in \mathcal{T}_{c}}
|I_{l_1,l_2,l_3}|^2
}{
\sum_{l_1,l_2,l_3 \leq l_{\max}}
|I_{l_1,l_2,l_3}|^2
}.
\end{equation}
When $\rho(l_c)$ remains sufficiently large, the invariant descriptor preserves discriminative, input-dependent structure after low-pass filtering. In this sense, the representation remains informative rather than degenerating into an input-independent or nearly constant descriptor. This formulation clarifies the role of low-/mid-frequency bispectral couplings in maintaining robust extraction under resizing.

% \subsubsection{Robustness to Resizing and Low-Pass Perturbations}

% Resizing operations with appropriate anti-aliasing can be approximated as low-pass filtering in the spherical harmonic domain:
% \[
% \tilde c_l^m =
% \begin{cases}
% c_l^m, & l \le l_c, \\
% 0, & l > l_c.
% \end{cases}
% \]

% Substituting into Eq.~\eqref{eq:global_bispectrum} yields
% \[
% \tilde I
% =
% \sum_{\substack{l_1,l_2,l_3 \le l_c}}
% \sum_{m_1,m_2,m_3}
% C^{0,0}_{l_1 m_1\, l_2 m_2\, l_3 m_3}
% \, c_{l_1}^{m_1} c_{l_2}^{m_2} c_{l_3}^{m_3}.
% \]

% Compared to second-order invariants that depend on single-frequency energy statistics, the bispectrum aggregates information through multiplicative coupling across frequencies. As long as a sufficient subset of frequency triplets remains, the invariant does not collapse, explaining the robustness observed under moderate downsampling.

\subsubsection{Robustness to Additive Gaussian Noise.}
% We model additive noise in the spherical harmonic domain as
% \[
% \tilde c_l^m = c_l^m + \epsilon_l^m,
% \quad
% \epsilon_l^m \sim \mathcal{N}(0,\sigma^2).
% \]

% Substituting into Eq.~\eqref{eq:global_bispectrum} and expanding yields
% \begin{align}
% \tilde I
% &=
% I
% + \sum C^{0,0}_{l_1 m_1\, l_2 m_2\, l_3 m_3}
% (\epsilon_{l_1}^{m_1} c_{l_2}^{m_2} c_{l_3}^{m_3}
% + c_{l_1}^{m_1} \epsilon_{l_2}^{m_2} c_{l_3}^{m_3}
% + c_{l_1}^{m_1} c_{l_2}^{m_2} \epsilon_{l_3}^{m_3})
% + \mathcal{O}(\|\epsilon\|^2).
% \end{align}

% Since the noise is zero-mean and independent, all first-order terms vanish in expectation:
% \[
% \mathbb{E}[\tilde I] = I.
% \]

% Therefore, the bispectrum invariant is unbiased under additive Gaussian noise. Higher-order terms introduce variance but do not systematically corrupt the invariant, consistent with the observed robustness without noise-specific training.

We model additive noise in the spherical harmonic domain as:
\[
\tilde c_l^m = c_l^m + \epsilon_l^m,
\quad
\epsilon_l^m \sim \mathcal{N}(0,\sigma^2).
\]

To evaluate the stability of the watermarking signal, we examine the expectation of the third-order bispectrum invariant under this perturbation. Substituting the noisy coefficients into the tensor contraction (Eq.~\eqref{eq:global_bispectrum}) and taking the expectation, we observe that while the first-order noise terms vanish (due to $\mathbb{E}[\epsilon]=0$), the second-order interactions introduce a non-zero term:$$\mathbb{E}[\tilde{I}] = I + \sum C_{l_1 l_2 l_3}^{0,0} \left( c_{l_1} \cdot \mathbb{E}[\epsilon_{l_2} \epsilon_{l_3}] + \dots \right) + \mathbb{E}[\epsilon^3]$$
Since the noise is isotropic, the quadratic term $\mathbb{E}[\epsilon^2]$ is proportional to the noise variance $\sigma^2$. Consequently, the expected value of the noisy invariant takes the form:$$\mathbb{E}[\tilde{I}] \approx I + \beta \cdot c \cdot \sigma^2 \approx I (1 + \lambda \sigma^2)$$ where $\lambda$ is a scalar factor derived from the coupling constants.

This result indicates that while the estimator is mathematically biased (contradicting a zero-bias assumption), the bias is structure-preserving. Specifically, the noise acts primarily as a magnitude scaling factor proportional to the signal strength $c$, rather than an additive shift that disrupts the feature's sign or relative orientation. This geometric property explains the experimental robustness without explicit noise augmentation. Since the perturbation scales the feature vector but preserves its directionality and relative ordering, it does not push the embedding across the decision boundary of the MLP decoder. Thus, the watermark remains recoverable even when the feature magnitude is modulated by noise interference. (Note: While the real-valued constraint of images imposes conjugate symmetry on $\epsilon$, implying correlations between $\epsilon^m$ and $\epsilon^{-m}$, this strictly affects the magnitude of the scalar $\lambda$ but does not alter the fundamental conclusion that the distortion manifests as a signal-dependent scaling.)

\subsubsection{JPEG Compression and Localized Distortions}
JPEG compression is non-linear in the spatial domain and difficult to model exactly in spherical harmonics. However, its primary effect consists of localized block-wise quantization and high-frequency suppression. Since bispectral invariants aggregate global spectral interactions through Clebsch--Gordan contractions, such localized distortions do not coherently align with the invariant subspace. Consequently, the global bispectral response remains stable, consistent with our empirical observations.

\subsubsection{Isolated Quantitative Analysis of Bispectral Descriptors}

To further validate the stability properties analyzed above, we evaluate the bispectral descriptor in isolation under representative distortions. This experiment directly measures the change of the invariant representation itself, before the learning-based watermark decoder.

For each clean panorama $x$, we compute its bispectral descriptor $B(x)$ from the spherical harmonic coefficients. Given a distorted image $\hat{x}$, we compute $B(\hat{x})$ using the same cutoff degree and frequency-triplet configuration. We then measure the cosine similarity between the two descriptors:
\begin{equation}
S_{\cos}
=
\frac{\langle B(\hat{x}), B(x)\rangle}
{\|B(\hat{x})\|_2\|B(x)\|_2}.
\end{equation}
A higher cosine similarity indicates that the bispectral descriptor preserves its direction in the invariant feature space after distortion.

\begin{table}[t]
\centering
\caption{Isolated analysis of bispectral descriptors under common distortions. Cosine similarity is computed between the bispectrum before and after applying each distortion.}
\label{tab:isolated_bispectrum}
\begin{tabular}{lcc}
\toprule
Distortion & Strength & Cosine Similarity $\uparrow$ \\
\midrule
Gaussian Blur & $\sigma=3$ & 0.997849 \\
Resize & $0.5\times$ & 0.999972 \\
Gaussian Noise & std. $=0.05$ & 0.999965 \\
Combined & -- & 0.997458 \\
\bottomrule
\end{tabular}
\end{table}

As shown in Table~\ref{tab:isolated_bispectrum}, the bispectral descriptor remains highly consistent under isolated distortions, with cosine similarity above 0.997 even under the combined setting. These results quantitatively support the stability analysis above: common distortions perturb the bispectrum in a structured and smooth manner, while largely preserving the direction of the invariant descriptor.

\section{More Analysis}
\subsection{Overhead Evaluation}
We evaluate the encoding, decoding and total time cost and GPU memory usage of watermarking methods on an NVIDIA A100 GPU. The results are averaged over 1,000 images. As shown in Table~\ref{time}, our method demonstrates a comparatively low cost both in inference time and in GPU usage.

\begin{table*}[h]
\centering
\caption{Comparison of watermarking methods based on running time per single image and GPU memory usage. The results are averaged over 1,000 images.}
\label{time}
\resizebox{\textwidth}{!}{%
\begin{tabular}{l cccc}
\toprule
\textbf{Method} & \textbf{Encoding Time (s)} & \textbf{Decoding Time (s)} & \textbf{Total Time (s)} & \textbf{Memory (GB)} \\
\midrule

StegaStamp~\cite{stegastamp} 
& 0.0354 & 0.0318 & 0.0672 & 2.0 \\

SepMark~\cite{sepmark}    
& 0.0053 & 0.0055 & 0.0108 & 0.9 \\

TrustMark~\cite{trustmark}   
& 0.0197 & 0.0147 & 0.0344 & 0.6 \\

EditGuard~\cite{editguard}  
& 0.1659 & 0.0773 & 0.2432 & 1.7 \\

Robust-Wide~\cite{robustwide} 
& 0.0111 & 0.0157 & 0.0268 & 3.0 \\

VINE~\cite{vine}    
& 0.0583 & 0.0103 & 0.0686 & 4.9 \\

\textbf{TRIAD (Ours)}   
& \textbf{0.0081} & \textbf{0.0072} & \textbf{0.0153} & \textbf{0.8} \\

\bottomrule
\end{tabular}
}
\end{table*}

% \textbf{Representation Leakage}

\subsection{Spectral Precision and Orthogonal Embedding}
To verify the disentanglement capabilities of our encoder, we examine the distribution of watermark energy across the spherical harmonic degrees. 
We decompose the image signal into its spectral components and compute the power spectrum $P(l)$ for each degree $l$:$$P(l) = \sum_{m=-l}^{l} \| {c}_{l}^m \|^2$$ where ${c}_{l}^m$ denotes the spherical harmonic coefficients.

Ideally, the embedding scheme should modulate only the specific degrees allocated for the payload, ensuring that the information is strictly confined to the target subspaces $\mathcal{V}_{embed}$ without leaking into or corrupting adjacent coefficients. Figure~\ref{fidelity_analysis} presents the spectral energy difference between the cover and watermarked images. These results demonstrate the high precision of our modulation mechanism, where energy variations are only observed exclusively at the pre-defined embedding degrees from $\mathcal{V}_{embed}$. This confirms that the embedding strategy correctly map the latent watermark code onto the intended geometric features. While for the non-targeted degrees, the coefficient norms remain virtually identical to those of the original image (relative change $\approx 0$). This indicates that our method achieves near-perfect orthogonality of spectral watermark injection. The absence of unintended perturbations in non-watermarked degrees validates the functionality of the extraction process and preserves the image quality by leaving the vast majority of the frequency components unchanged, which aligns perfectly with our design objective of localized and disentangled feature modulation.
% We investigate the Representation Leakage of different spherical harmonics coefficients.
% To verify that our watermarking scheme does not introduce perceptible artifacts or distort global image statistics, we analyze the signal energy in the spherical harmonic domain.  Fig~\ref{fidelity_analysis} illustrates the comparison between the spectral energy of the cover image and the watermarked image. Two key observations confirm the stealthiness of our approach:Low-Frequency Preservation ($l \le 3$): The energy magnitudes in the low-order modes, which correspond to the global illumination and coarse structural layout (e.g., sky, ground), remain statistically unchanged. This ensures that the fundamental semantic content and color distribution of the image are preserved.High-Frequency Consistency ($l > 3$): In higher-order modes, which encode texture and fine details, the watermarked spectrum closely follows the original envelope. The relative change rate shows that the watermark energy is spread diffusely across the spectrum rather than spiking at specific frequencies. Conclusion:The encoder successfully injects the watermark information by subtly modulating the coefficients within the ``Just Noticeable Difference" (JND) margin of the frequency domain. By adhering to the natural spectral statistics of the cover image, the model achieves high embedding capacity without compromising perceptual fidelity.

\subsection{Verification of Rotational Invariance}
To empirically validate the theoretical invariance of our bispectrum-based architecture, we conduct a rigorous stability analysis on the latent feature space. Specifically, given an input $x$ and a randomly rotated counterpart $x_{rot} = g \cdot x$ (where $g \in SO(3)$), we extract their respective tensor product feature vectors $\mathbf{z}$ and $\mathbf{z}_{rot}$ and analyze their correspondence. As illustrated in Fig~\ref{invariance_analysis}, the Scatter Correlation analysis reveals a near-perfect linear relationship between $\mathbf{z}$ and $\mathbf{z}_{rot}$. All feature points densely cluster around the diagonal identity line ($y=x$), demonstrating that the feature magnitude remains consistent regardless of the input's geometric orientation. The Pearson correlation coefficient is close to $1.0$, indicating strong linear dependence.
% Second, the Spectral Fingerprint visualization provides a microscopic view of the feature topology. As observed, the activation patterns (fingerprints) of the original and rotated features share an identical harmonic structure. The Difference Horizon (bottom row) shows that the residual error $|\mathbf{z} - \mathbf{z}_g|$ is essentially Gaussian noise with negligible magnitude compared to the signal strength. 
Furthermore, the overlay plot offers a microscopic view of the feature topology. We observe that the red dashed line (representing $\mathbf{z}_{rot}$) explicitly tracks the blue solid line (representing $\mathbf{z}$) across feature dimensions. The overlapping waveforms indicate that the bispectrum contraction mechanism successfully eliminates the perturbations induced by rotation.
These observations align with our theoretical derivation. By computing the third-order invariants (bispectrum) via the contraction $\textbf{TP}(\text{TP}(u,u), u) \rightarrow \mathcal{V_0}$, the model effectively cancels out the Wigner-D matrices induced by rotation. This confirms that our decoder has learned a robust $SO(3)$-invariant representation, ensuring reliable watermark extraction even under extreme geometric distortions.

\subsection{From Power Spectrum to Bispectrum}
\label{bispectrum}
Achieving robustness to arbitrary 3D rotations requires extracting watermark information from representations that are strictly invariant under $SO(3)$. Intuitively, we observe that it can be achieved by constructing invariants from spherical harmonic (SH) coefficients via tensor contractions that project onto the trivial representation.

The simplest such invariant is the power spectrum,
\begin{equation}
P_{l} = \sum_{m=-l}^{l} |c_{l}^{m}|^2 = \|\mathbf{c}_{l}\|^2,
\end{equation}
which can be interpreted as a second-order contraction of $f \otimes f$ onto the scalar ($0e$) irreducible representation. Since Wigner-$D$ matrices are unitary, $P_{l}$ is exactly invariant under arbitrary rotations. However, this construction discards all relative phase information between SH coefficients. As a result, the mapping from signals to power spectra is highly non-injective: structurally distinct signals may share identical power spectra. For watermarking, this phase blindness fundamentally limits both embedding capacity and reconstructability.

To overcome this limitation, we turn to higher-order invariant constructions. In particular, third-order tensor products allow the preservation of phase coupling while maintaining exact $SO(3)$ invariance. This leads naturally to the bispectrum invariant, which is obtained by contracting three SH coefficients via Clebsch--Gordan (CG) coefficients corresponding to projection onto the trivial representation:
\begin{equation}
I =
\sum_{l_1,l_2,l_3}
\sum_{m_1,m_2,m_3}
C^{0,0}_{l_1 m_1\, l_2 m_2\, l_3 m_3}
\, c_{l_1}^{m_1} c_{l_2}^{m_2} c_{l_3}^{m_3}.
\end{equation}
By construction, $I$ is a scalar invariant under $SO(3)$.

Unlike the power spectrum, the bispectrum retains closed-loop phase coupling across frequencies, resolving much of the non-injectivity inherent to second-order invariants. Classical results in harmonic analysis show that, under mild conditions, bispectral invariants uniquely characterize a signal up to a global rotation. Consequently, embedding watermark information into the bispectrum invariant space achieves exact rotational robustness while substantially increasing representational capacity.

In our framework, this invariant arises naturally from the quadratic tensor product of equivariant features, followed by projection onto the $0e$ subspace. This representation-theoretic construction ensures that robustness to arbitrary rotations is achieved by design, rather than through data augmentation, while preserving the structural information necessary for reliable watermark decoding.

\section{More Details}
\subsection{Perceptual Module Detail}
\label{perceptual}
The perceptual module modulates the watermark residual using two complementary components to ensure imperceptibility.
First, a Learnable Content Mask is generated by a lightweight CNN (consisting of three convolutional layers with SiLU activations and a final Sigmoid) to adaptively hide information in high-texture regions.

Second, we apply a fixed Geometric Prior $M_{geo}(\theta) = \sin(\theta)$ to counteract the non-uniform sampling of Equirectangular Projection (ERP).
Since the panoramic image is an Equirectangular Projection (ERP) of the sphere, pixels do not represent equal areas. The sampling density increases significantly near the poles ($\theta \to 0, \pi$), meaning modifications in these regions are spatially magnified when viewed on the sphere. To counteract this distortion, we introduce a fixed geometric prior $M_{geo}$ derived from the spherical area element $dA = \sin\theta d\theta d\phi$.
For an image of height $H$, the weight for row $i$ (corresponding to polar angle $\theta_i$) is calculated as:
\begin{equation}
M_{geo}(i) = \sin(\theta_i), \quad \text{where } \theta_i = \frac{i}{H} \pi
\end{equation}
This sinusoidal weighting suppresses watermark strength near the poles while allowing full strength at the equator, preventing polar artifacts.

\begin{figure*}[h]
  \vskip 0.2in
  \begin{center}
    \centerline{\includegraphics[width=0.7\linewidth]{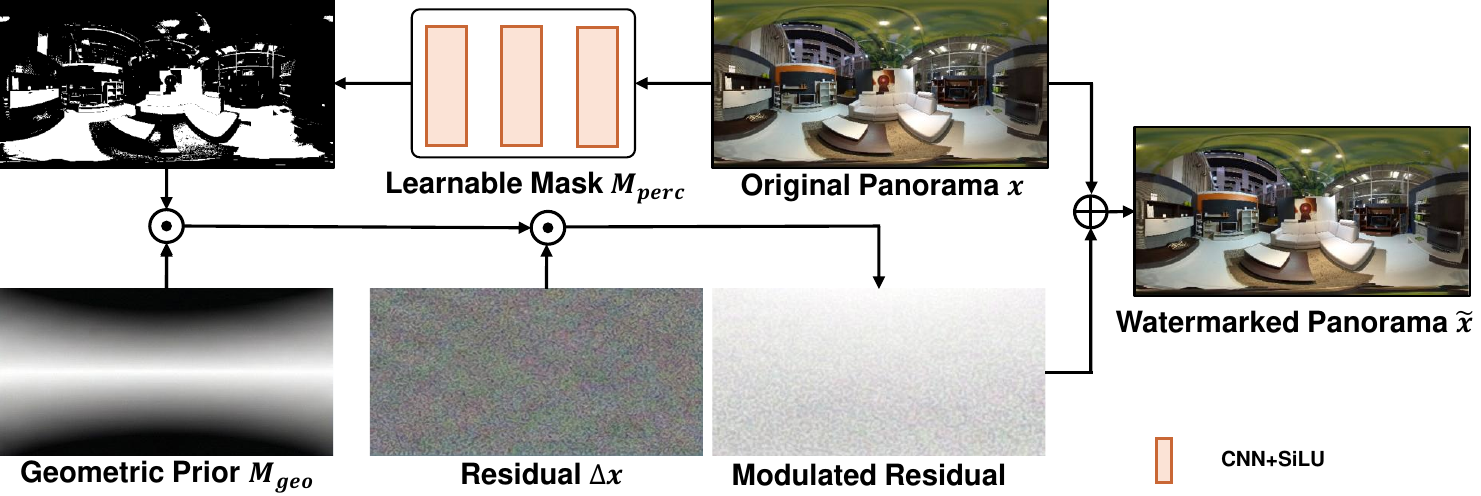}}
    \caption{\textbf{Perceptual module.} The learnable mask $M_{perc}$ is three blocks of Convolution networks followed by an activation.}
    \label{icml-historical}
  \end{center}
\end{figure*}

\subsection{Network Architecture Details}
\label{network}
We provide the detailed specifications of the core modules.

1. Equivariant Tensor Product Interaction $ \text{TP}_{\vartheta}$. To enable information mixing within the spherical harmonic domain, we employ a \texttt{FullyConnectedTensorProduct} from \texttt{e3nn}~\cite{geiger2022e3nn} library. We define the parameterized equivariant tensor product, denoted as $\text{TP}_\vartheta(\cdot, \cdot)|_{V_{out}}$, which couples two input representations using fixed Clebsch-Gordan coefficients followed by learnable path weights to map the result to a specified output subspace $V_{out}$.
% Specifically, this module computes the tensor contraction of input features with learnable weights, coupling irreducible representations via Clebsch-Gordan coefficients.

% :\begin{equation}f_{out} = \text{TP}(f_{in}, f_{in}) \rightarrow \mathcal{V}_{hidden}\end{equation}
% This operation serves as the primary mechanism for feature interaction while strictly preserving $SO(3)$ equivariance.

2. Watermark $w$ Mapping. We apply 3 three layers of linear and activation block to project $w$ from $k$ to $D_w = 512$ dimensions.

3. $SO(3)$-Equivariant Backbone. The backbone consists of stacked Gated Blocks, which are essential for introducing non-linearity to geometric tensors without breaking equivariance. Each block performs the following operations:\begin{itemize}\item Linear Projection: An equivariant linear layer.
\item Gating Mechanism: To activate higher-order tensors (which do not have a rotation-invariant sign), the features are separated into scalars and gated tensors.\begin{itemize}\item Scalars ($0e$): Activated directly using the SiLU function.\item Higher-order Tensors ($l > 0$): Modulated element-wise by a learned scalar gate (activated via Sigmoid to $[0, 1]$).\end{itemize}\end{itemize}
This structure ensures that the magnitude of directional features is non-linearly transformed while their orientation remains equivariant.

4. MLP Decoder. The decoder maps the extracted rotation-invariant bispectral scalars to the binary watermark vector. It follows a standard Multi-Layer Perceptron structure of: A sequence of Linear layers ($dim \to 256 \to 128$) with SiLU activations and a final Linear projection maps the features to the target watermark dimension.

\subsection{More Ablation Analysis}
We validate the necessity of the proposed Perceptual Module by removing it from the embedding pipeline (w/o Perceptual Module). As shown in Table~\ref{tab:ablation}, disabling the module leads to a sharp decline in visual quality, with PSNR dropping from 39.22 dB to 34.15 dB. Without the learnable mask to suppress perturbations in smooth regions (e.g., sky or walls), the watermark becomes visually intrusive, confirming that the perceptual module helps to maintain high fidelity.

We further analyze the optimal dimension $D_w$ for the watermark projection MLP. Our method set $D_w = 512$. Reducing the dimension to 256 restricts the representation capacity, causing the Bit Accuracy to fall to 96.8\%, as the bottleneck limits the effective encoding of the watermark signal. Meanwhile, increasing the dimension to 1024 introduces no gain in both bit accuracy and visual fidelity. This confirms that our design choice ($D_w=320$), which confirms our setting is efficient yet effective.
\begin{table}[h]
    \centering
    \caption{\textbf{Ablation study on network components and hyper-parameters.} We investigate the contribution of the Perceptual Module and the impact of the watermark expansion dimension ($D_w$). The default setting (Ours) uses $D_w=512$ with the Perceptual Module enabled.}
    \label{tab:ablation}
    \resizebox{0.5\columnwidth}{!}{%
    \begin{tabular}{l|cc}
        \toprule
        \textbf{Method / Variant} & \textbf{PSNR (dB)} $\uparrow$ & \textbf{Bit Acc (\%)} $\uparrow$ \\
        \midrule
        \textbf{TRIAD (Ours)} & \textbf{39.22} & \textbf{100.0} \\
        \midrule
        \multicolumn{3}{l}{\textit{Component Analysis}} \\
        w/o Perceptual Module & 35.15 & 100.0 \\
        \midrule
        \multicolumn{3}{l}{\textit{Watermark Expansion Dimension ($D_w$)}} \\
        $D_w = 256$ & 39.45 & 96.8 \\
        $D_w = 1024$ & 38.60 & 100.0 \\
        \bottomrule
    \end{tabular}
    }
\end{table}

\subsection{Distortions}
\label{distortions}
In our method, we apply a set of commonly used noise perturbations and image transformations to evaluate robustness and performance under degraded visual conditions. The detailed parameter settings are summarized as follows:

\begin{itemize}
    \item \textbf{JPEG Compression}: quality factor = 60.
    \item \textbf{Resize}: scaling factor = 0.5.
    \item \textbf{Gaussian Blur}: kernel size = 1, $\sigma = 3$.
    \item \textbf{Gaussian Noise}: mean = 0, standard deviation = 0.05.
    \item \textbf{Median Filter}: kernel size = 3.
    
    \item \textbf{Brightness Adjustment}: range = [0.7, 1.3].
    \item \textbf{Contrast Adjustment}: range = [0.7, 1.3].
\end{itemize}
\subsection{Additional Baseline Rotation Augmentation Training Details}
\label{sec:baseline_aug}

Unlike the continuous rotation invariance guaranteed by our method, baseline models lack intrinsic geometric priors and must rely on seeing specific transformations during training. To simulate this, we randomly select a fixed set of $K=10$ discrete rotations from the continuous group $SO(3)$ as anchor augmentations.
We limit the set size to $K=10$ because $SO(3)$ rotations induce drastic, non-linear geometric deformations in the equirectangular domain. Attempting to train on a dense or continuous sampling of the rotation manifold introduces excessive distributional variance, which we found empirically to prevent the baseline models from converging. Thus, this limited set serves as a tractable representative sample of the global rotation space to ensure fair comparison.
We fine-tune all baseline methods on the PanoContext and SUN360 datasets to evaluate their empirical robustness. Specifically, for each training step, a rotation is stochastically selected from this pre-defined set and applied to the image after watermark embedding. All models are initialized from their officially released checkpoints and trained for 100 epochs using the AdamW optimizer with a learning rate of $1 \times 10^{-4}$.

We evaluate the performance of these augmented baselines in Table~\ref{tab:psnr_accuracy}. 
As observed, visual fidelity drops significantly due to the inherent trade-off between imperceptibility and robustness in non-equivariant models. Crucially, even when trained and evaluated on this highly restricted subset—which is far simpler than real-world continuous rotation—their performance remains unsatisfactory. This failure to generalize even to a finite set further demonstrates the necessity of theoretical invariance over data augmentation for $SO(3)$ robustness.
\begin{table}[h]
\centering
\caption{Performance of baseline methods on PSNR and Average Bit Accuracy on panoramas rotated from the pre-defined anchor rotation set. The results are averaged over 1,000 images and angles from the set.}
\label{tab:psnr_accuracy}
\resizebox{0.5\columnwidth}{!}{%
\begin{tabular}{l c c}
\toprule
\textbf{Method} & \textbf{PSNR (dB)} & \textbf{Average Bit Accuracy (\%)} \\
\midrule

StegaStamp~\cite{stegastamp}
& 25.15 & 73.38\% \\

SepMark~\cite{sepmark}
& 31.42 & 76.65\% \\

TrustMark~\cite{trustmark}
& 35.50 & 78.15\% \\

EditGuard~\cite{editguard}
& 32.18 & 77.14\% \\

Robust-Wide~\cite{robustwide}
& 37.10 & 82.23\% \\

VINE~\cite{vine}
& 31.85 & 79.40\% \\

\bottomrule
\end{tabular}
}
\end{table}

\subsection{Group-wise Extension for Payload Enhancement}
\label{sec:group}
In the main paper, watermark embedding is performed in the
embedding space $\mathcal V_{{embed}}$.
To increase capacity, we introduce a group-wise extension that
partitions $\mathcal V_{{embed}}$ into multiple
independent subspaces.

Concretely, the embedding space is composed of irreducible
representations with multiplicity:
\begin{equation}
\mathcal V_{{embed}}
=
\bigoplus_{l \in \mathcal L_{embed}}
\left( m_l \times \mathcal V_l \right),
\end{equation}
where $m_l$ denotes the multiplicity of degree-$l$ irreducible
representations.
We evenly partition the multiplicity dimension into $G$ groups,
yielding:
\begin{equation}
m_l \times\mathcal  V_l
=
\bigoplus_{g=1}^{G}
\left( \tfrac{m_l}{G} \times \mathcal V_l \right),
\quad \forall l \in \mathcal L_{embed}.
\end{equation}

The watermark vector $w$ is partitioned into
$\{w^{(g)}\}_{g=1}^{G}$.
For each group $g$, watermark information is injected only into
the corresponding multiplicity slice:
\begin{equation}
\Delta u^{(g)} =
\text{TP}_{\vartheta_1}^{(g)}(u^{(g)}, w^{(g)})|_{\left(\tfrac{m_l}{G} \times \mathcal V_l\right)},
\quad
\text{TP}_{\vartheta_1}^{(g)} :
\left(\tfrac{m_l}{G} \times \mathcal V_l\right) \otimes V_0
\rightarrow
\left(\tfrac{m_l}{G} \times \mathcal V_l\right).
\end{equation}
The full modified representation is obtained by concatenating
all group-wise updates along the multiplicity dimension.

During extraction, invariant statistics are computed
independently within each group:
\begin{equation}
z^{(g)} =
\text{TP}\!\left(
\hat{u}^{(g)}, \hat{u}^{(g)}, \hat{u}^{(g)}
\right),
\end{equation}
followed by concatenation of the group-wise decoded outputs.

Importantly, groups are formed by splitting the multiplicity
dimension of each irreducible representation, rather than by
mixing different degrees $l$, which preserves orthogonality and
$SO(3)$-equivariance within each group.

\subsection{Resolution Scaling}
\label{resolution}
Algorithm~\ref{alg:resolution_scaling} is adapted from the scaling method proposed in TrustMark~\cite{trustmark}. It allows a watermark model trained at a fixed resolution to be applied to images of arbitrary resolutions without performance degradation.
\begin{algorithm}[h]
\caption{Resolution scaling - watermark embedding on arbitrary resolution images}
\label{alg:resolution_scaling}
\begin{algorithmic}[1]
    % 手动在 algorithmic 环境外或环境内通过 \item[] 模拟 Input/Output
    % 这里通过重定义命令来实现
    \renewcommand{\algorithmicrequire}{\textbf{Input:}}
    \renewcommand{\algorithmicensure}{\textbf{Output:}}
    
    % 使用 \item[] 来避免 Data 被编号
    \REQUIRE Original image $\mathbf{x}$, [binary watermark vector $\mathbf{w}$]
    \ENSURE Watermarked image $\mathbf{y}$
    \item[\textbf{Data:}] Embedding network $E$ trained on resolution $m \times n$
    
    % 下面开始正文
    \STATE $H, W \leftarrow \text{height}(\mathbf{x}), \text{width}(\mathbf{x})$
    \STATE $\mathbf{x} \leftarrow \mathbf{x} / 127.5 - 1$ \hfill // Normalize to range $[-1, 1]$
    \STATE $\tilde{\mathbf{x}} \leftarrow \text{interpolate}(\mathbf{x}, (m, n))$
    \STATE $\mathbf{r} \leftarrow E(\tilde{\mathbf{x}}, \mathbf{w}) - \tilde{\mathbf{x}}$ \hfill // Residual image
    \STATE $\mathbf{r} \leftarrow \text{interpolate}(\mathbf{r}, (H, W))$
    \STATE $\mathbf{y} \leftarrow \text{clamp}(\mathbf{x} + \mathbf{r}, -1, 1)$
    \STATE $\mathbf{y} \leftarrow \mathbf{y} \times 127.5 + 127.5$
\end{algorithmic}
\end{algorithm}

\section{More Results}

\subsection{Visualizations of $SO(3)$ Rotations}
See Fig~\ref{rotation_visual}.

\subsection{Visualizations of Comparative Embedding}
See Fig~\ref{compare_visual}.
% \subsection{TPR And Comparative Visual Results are in Appendix.}

\begin{figure*}[ht]
  \vskip 0.2in
  \begin{center}
    \centerline{\includegraphics[width=\linewidth]{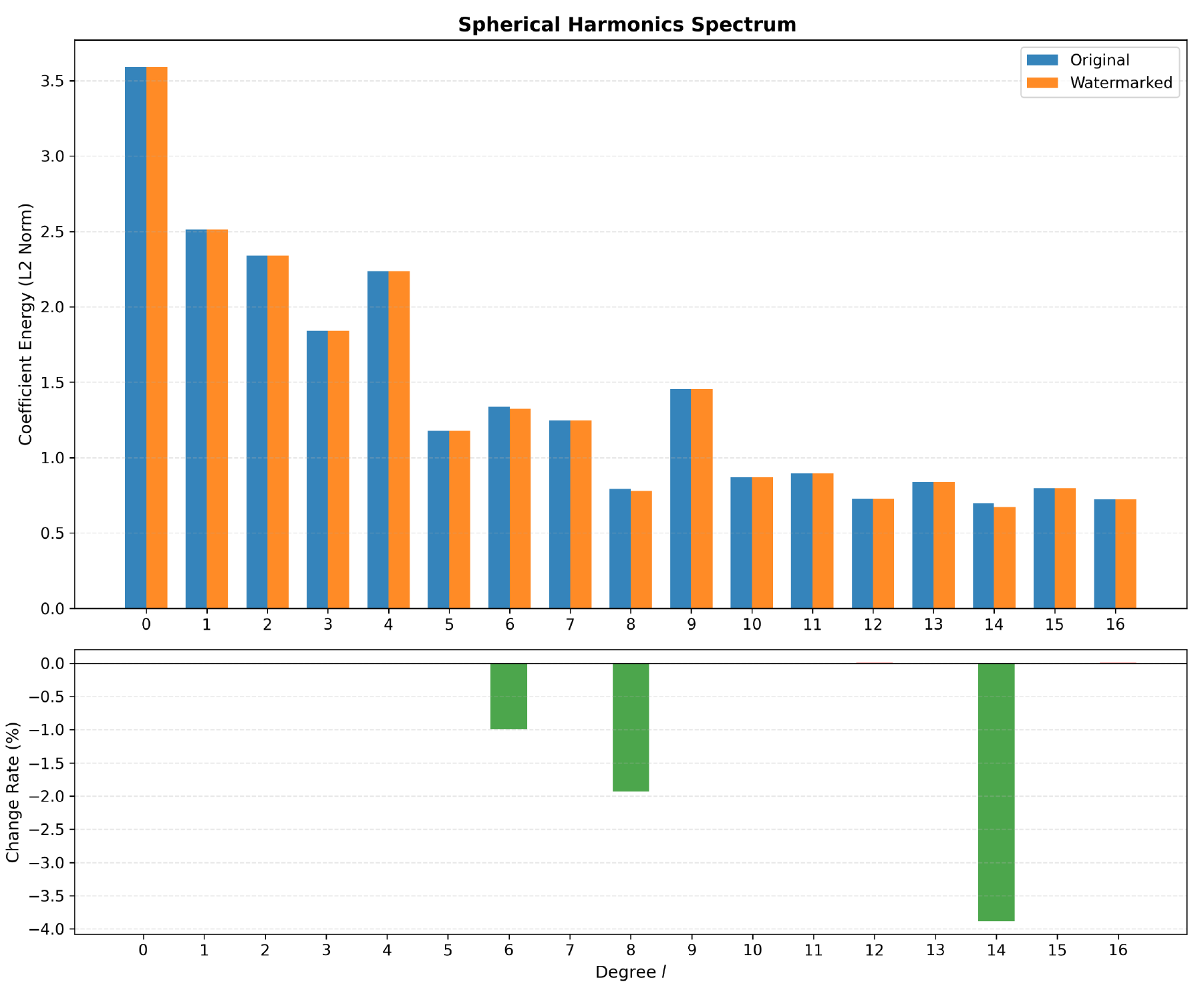}}
    \caption{
    Analysis of Spectral Targeting Precision in Spherical Harmonics.
We compare the aggregated energy spectrum (L2 norm of coefficients) across degrees $l=0$ to $l_{max}$ for the original (blue) and watermarked (orange) images. (Top) The spectral profiles exhibit high congruence, indicating that the watermarking process preserves the natural frequency distribution of the cover image. (Bottom) 
The relative change rate shows distinct, isolated modulations only at the specific degrees targeted for watermark embedding (e.g., $l \in \mathcal{L}_{target}$) and confirms that non-targeted degrees ($l \notin \mathcal{L}_{target}$) remain virtually perturbed (near-zero deviation). This validates the orthogonality of our embedding mechanism, demonstrating that the watermark is accurately confined to the designated subspaces without spectral leakage, thereby preserving the fidelity of unmodulated components.}
    \label{fidelity_analysis}
  \end{center}
\end{figure*}
\begin{figure*}[ht]
  \vskip 0.2in
  \begin{center}
    \centerline{\includegraphics[width=\linewidth]{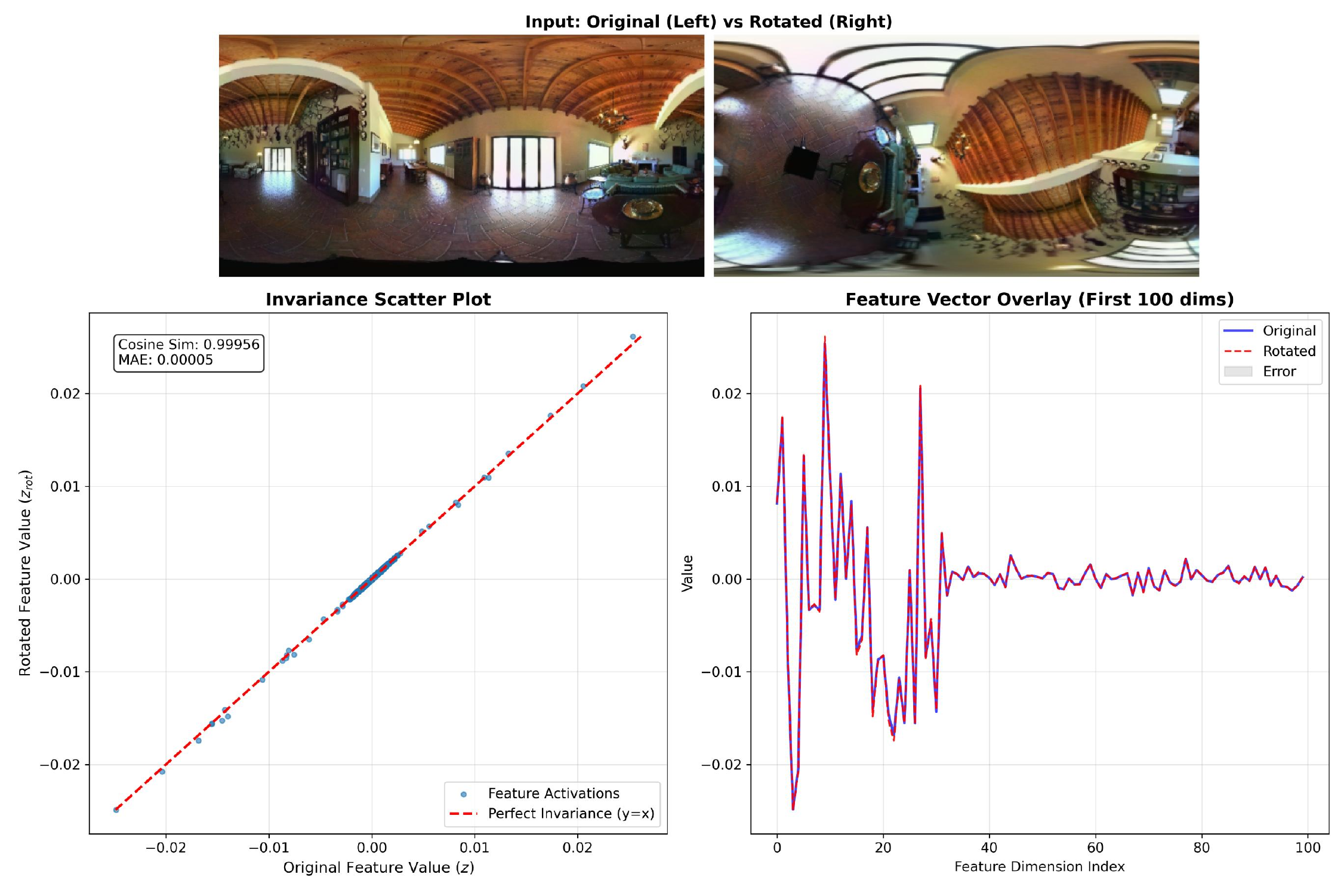}}
    \caption{
     Analysis of Rotational Invariance in Feature Space.
We evaluate the stability of the learned bispectrum features under arbitrary $SO(3)$ rotations. (Left) The Scatter Correlation plot compares feature activations of the original image ($x$-axis) versus the rotated image ($y$-axis). The tight clustering along the identity line ($y=x$) indicates high invariance. (Right) Feature Vector Overlay: A dimension-wise comparison where the blue solid line (Original Features) and the red dashed line (Rotated Features) exhibit near-perfect alignment. This visual overlap confirms that the bispectrum quantity effectively marginalizes geometric pose information and presents as a SO(3) invariant.}
    \label{invariance_analysis}
  \end{center}
\end{figure*}

\begin{figure*}[ht]
  \vskip 0.2in
  \begin{center}
    \centerline{\includegraphics[width=\linewidth]{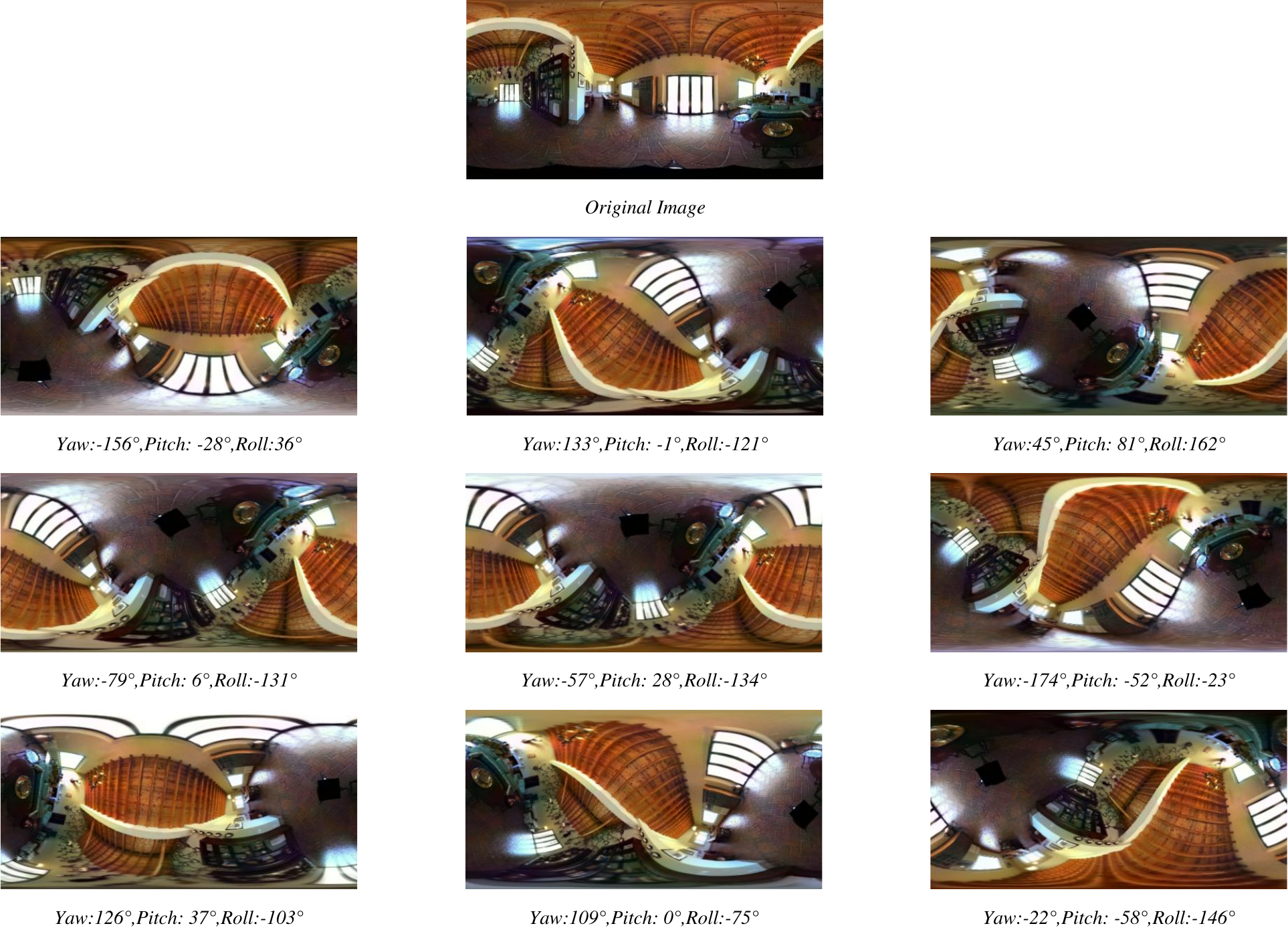}}
    \caption{
   \textbf{Visualization of drastic geometric deformations induced by random $SO(3)$ rotations in the Equirectangular Projection (ERP) domain}. The top panel shows the original canonical view, while the subsequent panels display the same scene under randomly sampled 3D rotations (annotated with Yaw, Pitch, and Roll). Note that standard rigid 3D rotations manifest as complex, non-linear pixel displacements and distortions in the 2D ERP format. This visualization highlights the inherent challenge for baseline models to learn rotation robustness solely through data augmentation, motivating the need for our theoretically invariant architecture. }
    \label{rotation_visual}
  \end{center}
\end{figure*}

\begin{figure*}[ht]
  \vskip 0.2in
  \begin{center}
    \centerline{\includegraphics[width=\linewidth]{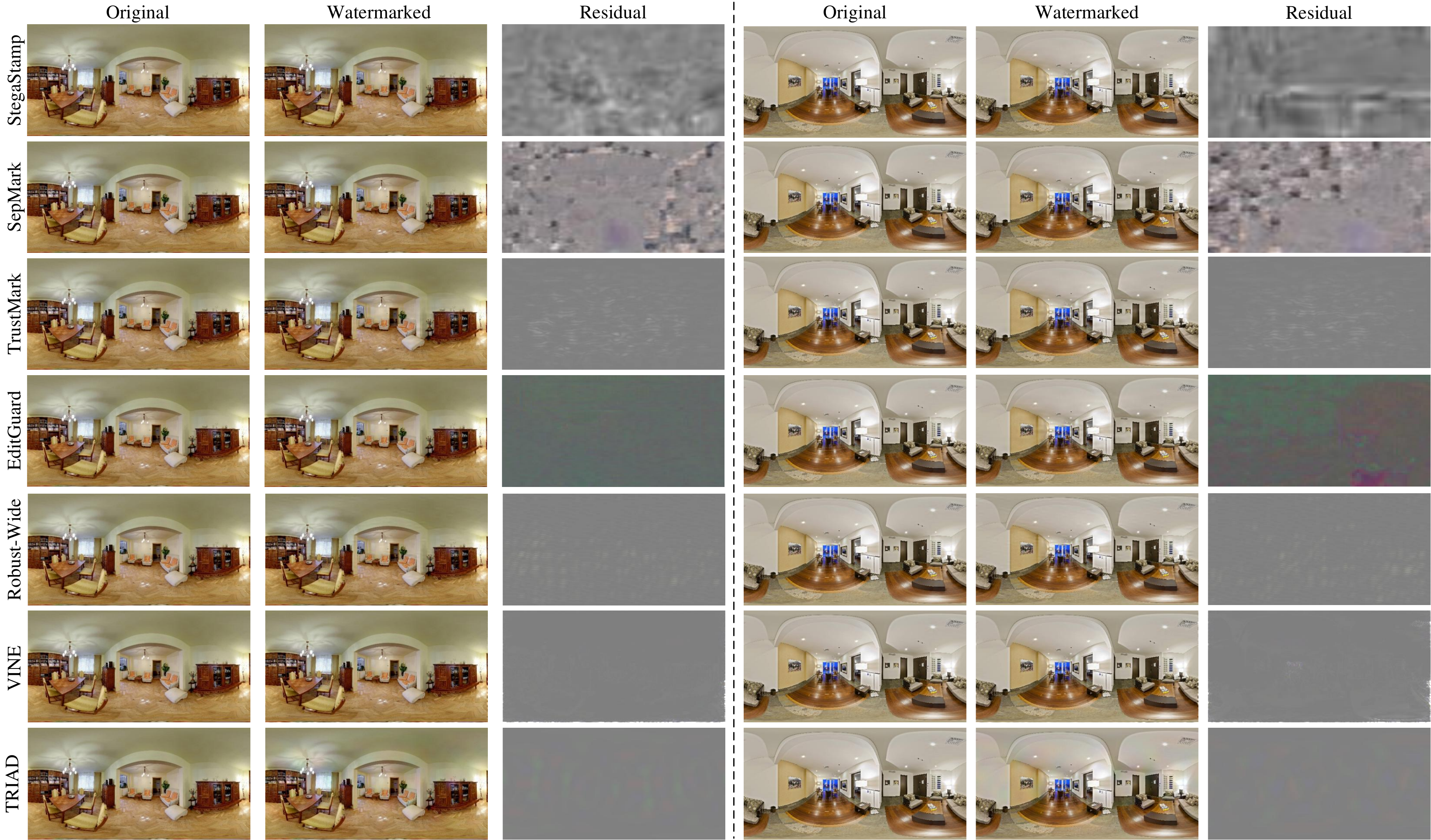}}
    \caption{
   Qualitative evaluation of visual fidelity and imperceptibility. From left to right: the original cover panorama, the watermarked panorama, and the pixel-wise residual map.}
    \label{compare_visual}
  \end{center}
\end{figure*}

\end{document}